\newtheorem{theorem}{Theorem}
\newtheorem{lemma}{Lemma}
\newtheorem{corollary}{Corollary}
\newtheorem{definition}{Definition}
\newtheorem{assumption}{Assumption}
\DeclareMathOperator*{\maximize}{maximize}
\DeclareMathOperator*{\argmax}{arg\,max}
\DeclareMathOperator{\residual}{Residual}
\DeclareMathOperator{\fc}{FC}
\DeclareMathOperator{\conv}{Conv}
\DeclareMathOperator{\concat}{Concat}
\DeclareMathOperator{\networkinput}{Input}
\title{Learning perturbation sets for robust machine learning}
\author{%
  Eric Wong \\
  Department of Electrical Engineering and Computer Science
\\
  Massachusetts Institute of Technology\\
  Cambridge, MA 02139, USA \\
  \texttt{wongeric@mit.edu} \\
  % examples of more authors
   \And
   J. Zico Kolter \\
   Computer Science Department \\
   Carnegie Mellon University \\
   Pittsburgh, PA 15213, USA \\
   \texttt{zkolter@cs.cmu.edu} \\
  % \AND
  % Coauthor \\
  % Affiliation \\
  % Address \\
  % \texttt{email} \\
  % \And
  % Coauthor \\
  % Affiliation \\
  % Address \\
  % \texttt{email} \\
  % \And
  % Coauthor \\
  % Affiliation \\
  % Address \\
  % \texttt{email} \\
}
\begin{document}

\maketitle

\begin{abstract}
Although much progress has been made towards robust deep learning, a significant gap in robustness remains between real-world perturbations and more narrowly defined sets typically studied in adversarial defenses. In this paper, we aim to bridge this gap by \emph{learning} perturbation sets from data, in order to characterize real-world effects for robust training and evaluation. Specifically, we use a conditional generator that defines the perturbation set over a constrained region of the latent space. We formulate desirable properties that measure the quality of a learned perturbation set, and theoretically prove that a conditional variational autoencoder naturally satisfies these criteria. Using this framework, our approach can generate a variety of perturbations at different complexities and scales, ranging from baseline spatial transformations, through common image corruptions, to lighting variations. We measure the quality of our learned perturbation sets both quantitatively and qualitatively, finding that our models are capable of producing a diverse set of meaningful perturbations beyond the limited data seen during training. Finally, we leverage our learned perturbation sets to train models which are empirically and certifiably robust to adversarial image corruptions and adversarial lighting variations, while improving generalization on non-adversarial data. All code and configuration files for reproducing the experiments as well as pretrained model weights can be found at \url{https://github.com/locuslab/perturbation_learning}. 
\end{abstract}
\section{Introduction}
Within the last decade, adversarial learning has become a core research area for studying robustness and machine learning. Adversarial attacks have expanded well beyond the original setting of imperceptible noise to more general notions of robustness, and can broadly be described as capturing sets of perturbations that humans are naturally invariant to. These invariants, such as facial recognition should be robust to adversarial glasses \citep{sharif2019general} or traffic sign classification should be robust to adversarial graffiti \citep{eykholt2018robust}, form the motivation behind many real world adversarial attacks. However, human invariants can also include notions which are not inherently adversarial, for example image classifiers should be robust to common image corruptions \citep{hendrycks2019benchmarking} as well as changes in weather patterns \citep{michaelis2019benchmarking}. 

%On the other hand, successful and principled approaches for defending against $\ell_p$adversarial perturbations have largely relied on well-defined nature
On the other hand, although there has been much success in defending against small adversarial perturbations, most successful and principled methods for learning robust models are limited to human invariants that can be characterized using mathematically defined perturbation sets, for example perturbations bounded in $\ell_p$ norm. After all, established guidelines for evaluating adversarial robustness \citep{carlini2019evaluating} have emphasized the importance of the perturbation set (or the threat model) as a necessary component for performing proper, scientific evaluations of adversarial defense proposals. However, this requirement makes it difficult to learn models which are robust to human invariants beyond these mathematical sets, where real world attacks and general notions of robustness can often be virtually impossible to write down as a formal set of equations.  This incompatibility between existing methods for learning robust models and real-world, human invariants raises a fundamental question for the field of robust machine learning: 
%However, this restriction makes it difficult learn models which are robust to human invariants beyond these mathematical sets, as real world adversarial attacks and general notions of robustness can often be virtually impossible to write down as a formal set of equations.  This incompatibility between existing methods for learning robust models and real-world, human invariants raises a fundamental question for the field of robust machine learning: 

\begin{center}
%\emph{How can we learn robust models without a well-defined perturbation set?}
\emph{How can we learn models that are robust to perturbations without a predefined perturbation set?}
\end{center}

In the absence of a mathematical definition, in this work we present a general framework for learning perturbation sets from perturbed data. %, in order to capture broader notions of human invariants beyond mathematical sets. 
More concretely, given pairs of examples where one is a perturbed version of the other, we propose learning generative models that can ``perturb'' an example by varying a fixed region of the underlying latent space. The resulting perturbation sets are well-defined and can naturally be used in robust training and evaluation tasks. The approach is widely applicable to a range of robustness settings, as we make no assumptions on the type of perturbation being learned: the only requirement is to collect pairs of perturbed examples. 

Given the susceptibility of deep learning to adversarial examples, such a perturbation set will undoubtedly come under intense scrutiny, especially if it is to be used as a threat model for adversarial attacks. In this paper, we begin our theoretical contributions with a broad discussion of perturbation sets and formulate deterministic and probabilistic properties that a learned perturbation set should have in order to be a meaningful proxy for the true underlying perturbation set. The \emph{necessary subset} property ensures that the set captures real perturbations, properly motivating its usage as an adversarial threat model. The \emph{sufficient likelihood} property ensures that real perturbations have high probability, which motivates sampling from a perturbation set as a form of data augmentation. 
We then prove the main theoretical result, that a learned perturbation set defined by the decoder and prior of a conditional variational autoencoder (CVAE) \citep{sohn2015learning} implies both of these properties, providing a theoretically grounded framework for learning perturbation sets. %The quality of a CVAE can be further evaluated with metrics from probabilistic generative modeling such as the likelihood lower bound and samples drawn from the prior. 
The resulting CVAE perturbation sets are well motivated, can leverage standard architectures, and are computationally efficient with little tuning required.

%In this paper, we will begin with a broad discussion and formulate two properties that a learned perturbation model should have in order to be meaningful, the \emph{necessary subset} and \emph{minimal superset} properties, and establish several baselines for evaluating the quality of the threat model both qualitatively and quantitatively. 

%To make the approach more concrete, we then instantiate our approach with a conditional variational autoencoder, otherwise referred to as a CVAE \citep{sohn2015learning}. To theoretically motivate this choice, we prove that minimizing the CVAE objective (the log likelihood lower bound) implies both the necessary subset property as well as an expected variant of the minimal superset property, and discuss the implications of these results. We additionally discuss how metrics unique to probabilistic generative modeling, such as the likelihood lower bound and the ability to draw samples, can be used to further evaluate the quality of the learned perturbation model. The resulting perturbation models can leverage standard architectures and are simple and computationally efficient with little tuning required. 

We highlight the versatility of our approach using CVAEs with an array of experiments, where we vary the complexity and scale of the datasets, perturbations, and downstream tasks. We first demonstrate how the approach can learn basic $\ell_\infty$ and rotation-translation-skew (RTS) perturbations \citep{jaderberg2015spatial} in the MNIST setting. Since these sets can be mathematically defined, our goal is simply to measure exactly how well the learned perturbation set captures the target perturbation set on baseline tasks where the ground truth is known. We next look at a more difficult setting which can not be mathematically defined, and learn a perturbation set for common image corruptions on CIFAR10 \citep{hendrycks2019benchmarking}. The resulting perturbation set can interpolate between common corruptions, produce diverse samples, and be used in adversarial training and randomized smoothing frameworks. The adversarially trained models have improved generalization performance to both in- and out-of-distribution corruptions and better robustness to adversarial corruptions. In our final setting, we learn a perturbation set that captures \emph{real-world} variations in lighting using a multi-illumination dataset of scenes captured ``in the wild'' \citep{murmann2019dataset}. The perturbation set generates meaningful lighting samples and interpolations while generalizing to unseen scenes, and can be used to learn image segmentation models that are empirically and certifiably robust to lighting changes. 
All code and configuration files for reproducing the experiments as well as pretrained model weights for both the learned perturbation sets as well as the downstream robust classifiers are at \url{https://github.com/locuslab/perturbation_learning}. 

%All code and configuration files for reproducing our experiments are included in the supplementary material and will be released with the paper. %as well stabilize a YOLOv3 object detector \citep{redmon2018yolov3} over lighting conditions via data augmentation. 

%discuss how the probabilistic nature of the CVAE approach leads to several natural enticing properties, such as interpolating between perturbations and 

%Although some work has looked towards improving robustness to less well-defined perturbation sets such as image corruptions \citep{}, these approaches generally do not leverage any sort of information about the perturbation 

%In fact, standard guidelines 
%For defending against imperceptible, norm-bounded perturbations, the field now has 

\section{Background and related work}
%\todo{This background felt like it has gotten a bit long}

\paragraph{Perturbation sets for adversarial threat models}
Adversarial examples were initially defined as imperceptible examples with small $\ell_1$, $\ell_2$ and $\ell_\infty$ norm \citep{biggio2013evasion, szegedy2013intriguing, goodfellow2014explaining}, forming the earliest known, well-defined perturbation sets that were eventually generalized to the union of multiple $\ell_p$ perturbations \citep{tramer2019adversarial, maini2019adversarial, croce2019provable, stutz2019confidence}. Alternative perturbation sets to the $\ell_p$ setting that remain well-defined incorporate more structure and semantic meaning, such as rotations and translations \citep{engstrom2017rotation}, Wasserstein balls \citep{wong2019wasserstein}, functional perturbations \citep{laidlaw2019functional}, distributional shifts \citep{sinha2017certifying, sagawa2019distributionally}, word embeddings \citep{miyato2016adversarial}, and word substitutions \citep{alzantot2018generating, jia2019certified}. 

Other work has studied perturbation sets that are not necessarily mathematically formulated but well-defined from a human perspective such as spatial transformations \citep{xiao2018spatially}. Real-world adversarial attacks tend to try to remain either inconspicuous to the viewer or meddle with features that humans would naturally ignore, such as textures on 3D printed objects \citep{athalye2017synthesizing}, graffiti on traffic signs \citep{eykholt2018robust}, shapes of objects to avoid LiDAR detection \citep{cao2019adversarial}, irrelevant background noise for audio \citep{li2019adversarialmusic}, or barely noticeable films on cameras \citep{li2019adversarial}. 
Although not necessarily adversarial, \citet{hendrycks2019benchmarking} propose the set of common image corruptions as a measure of robustness to informal shifts in distribution. 

\paragraph{Generative modeling and adversarial robustness}
Relevant to our work is that which combines aspects of generative modeling with adversarial examples. While our work aims to learn \emph{real-world} perturbation sets from data, most work in this space differs in that they either aim to generate synthetic adversarial $\ell_p$ perturbations \citep{xiao2018generating}, run user studies to define the perturbation set \citep{sharif2019general}, or simply do not restrict the adversary at all \citep{song2018constructing, bhattad2020unrestricted}. 
\citet{gowal2019achieving} trained a StyleGAN to disentangle real-world perturbations when no perturbation information is known in advance. However the resulting perturbation set relies on a stochastic approximation, and it is not immediately obvious what this set will ultimately capture. Most similar is the concurrent work of \citet{robey2020model}, which uses a GAN architecture from image-to-image translation to model simple perturbations between datasets. In contrast to both of these works, our setting requires the collection of paired data to directly learn \emph{how} to perturb from perturbed pairs without needing to disentangle any features or translate datasets, allowing us to learn more targeted and complex perturbation sets. Furthermore, we formulate desirable properties of perturbation sets for downstream robustness tasks, and formally prove that a conditional variational autoencoder approach satisfies these properties. This results in a principled framework for learning perturbation sets that is quite distinct from these GAN-based approaches in both setting and motivation.

\paragraph{Adversarial defenses and data augmentation}
Successful approaches for learning adversarially robust networks include methods which are both empirically robust via adversarial training \citep{goodfellow2014explaining,kurakin2016adversarial, madry2017towards} and also certifiably robust via provable bounds \citep{wong2017provable, wong2018scaling, raghunathan2018certified, gowal2018effectiveness, zhang2019towards} and randomized smoothing \citep{cohen2019certified, yang2020randomized}. Critically, these defenses require mathematically-defined perturbation sets, which has limited these approaches from learning robustness to more general, real-world perturbations. We directly build upon these approaches by learning perturbation sets that can be naturally and directly incorporated into robust training, greatly expanding the scope of adversarial defenses to new contexts. 
Our work also relates to using non-adversarial perturbations via data augmentation to reduce generalization error \citep{zhang2017mixup, devries2017improved, cubuk2019autoaugment}, which can occasionally also improve robustness to unrelated image corruptions \citep{geirhos2018imagenet, hendrycks2019augmix, rusak2020increasing}. Our work differs in that rather than aggregating or proposing generic data augmentations, our perturbation sets can provide data augmentation that is targeted for a particular robustness setting.  

\section{Perturbation sets learned from data}
For an example $x \in \mathbb R^m$, a perturbation set $\mathcal S(x)\subseteq \mathbb R^m$ is defined informally as the set of examples which are considered to be equivalent to $x$, and hence can be viewed as ``perturbations'' of $x$. This set is often used when finding an adversarial example, which is typically cast as an optimization problem to maximize the loss of a model over the perturbation set in order to break the model. For example, for a classifier $h$, loss function $\ell$, and label $y$, an adversarial attack tries to solve the following: 
\begin{equation}
\maximize_{x' \in \mathcal S(x)} \ell(h(x'), y).
\end{equation}
%The problem of finding an adversarial example is typically defined as an optimization problem to maximize the loss of a model over a set of allowable perturbations, in order to break the model. More formally, for some classifier $h$, loss function $\ell$, and datapoint $x \in \mathbb R^m$ with label $y$, an adversary tries to solve the following: 
%\begin{equation}
%\maximize_{x' \in \mathcal S(x)} \ell(h(x'), y)
%\end{equation}
%where $\mathcal S(x) \subset \mathbb R^m$ is a set of perturbed versions of $x$ typically referred to as the \emph{threat model}. 
A common choice for $\mathcal S(x)$ is an $\ell_p$ ball around the unperturbed example, defined as $\mathcal S(x) = \{ x + \delta : \|\delta\|_p \leq \epsilon\}$ for some norm $p$ and radius $\epsilon$. This type of perturbation captures unstructured random noise, and is typically taken with respect to $\ell_p$ norms for $p \in \{0, 1, 2, \infty\}$, though more general distance metrics can also be used. 
%. More generally, a threat model can be defined by a distance metric $d$ as $\mathcal S(x) = \{ x' : d(x,x') \leq \epsilon\}$, for example the Wasserstein metric \citep{wong2019wasserstein}. 

Although defining the perturbation set is critical for developing adversarial defenses, in some scenarios, the \emph{true} perturbation set may be difficult to mathematically describe. 
% the set of perturbations $\mathcal S(x)$ that we would like to be robust to. 
In these settings, it may still be possible to collect observations of (non-adversarial) perturbations, e.g. pairs of examples $(x,\tilde x)$ where $\tilde x$ is the \emph{perturbed data}. In other words, $\tilde x$ is a perturbed version of $x$, from which we can learn an approximation of the true perturbation set.  
While there are numerous possible approaches one can take to learn $\mathcal S(x)$ from examples $(x, \tilde x$), in this work we take a generative modeling perspective, where examples are perturbed via an underlying latent space. Specifically, let $g: \mathbb R^k \times \mathbb R^m \rightarrow \mathbb R^m$ be a generator that takes a $k$-dimensional latent vector and an input, and outputs a perturbed version of the input. Then, we can define a \emph{learned} perturbation set as follows: 
\begin{equation}
\label{eq:perturbation_set}
\mathcal S(x) = \{ g(z,x) : \|z\| \leq \epsilon\}
\end{equation}
In other words, we have taken a well-defined norm-bounded ball in the latent space and mapped it to a set of perturbations with a generator $g$, which perturbs $x$ into $\tilde x$ via a latent code $z$. 
Alternatively, we can define a perturbation set from a \emph{probabilistic modeling} perspective, and use a distribution over the latent space to parameterize a distribution over examples. Then, $\mathcal S(x)$ is now a random variable defined by a probability distribution $p_\epsilon(z)$ over the latent space as follows: 
\begin{equation}
\label{eq:probabilistic_perturbation_set}
\mathcal S(x) \sim p_{\theta}\;\; \textrm{such that}\; \theta = g(z,x), \quad z \sim p_\epsilon
\end{equation} 
where $p_\epsilon$ has support $\{z : \|z\| \leq \epsilon\}$ and $p_{\theta}$ is a distribution parameterized by $\theta=g(z,x)$. 

%\emph{data-dependent} 
%\emph{perturbation model}. %We will first discuss broadly how to best evaluate perturbation models, before exploring the option of learning $g$ using variational autoencoders. 

\subsection{General measures of quality for perturbation sets}
\label{sec:properties}
A perturbation set defined by a generative model that is learned from data lacks the mathematical rigor of previous sets, so care must be taken to properly evaluate how well the model captures real perturbations. In this section we formally define two properties relating a perturbation set to data, which capture natural qualities of a perturbation set that are useful for adversarial robustness and data augmentation. 
We note that all quantities discussed in this paper can be calculated on both the training and testing sets, which allow us to concretely measure how well the perturbation set generalizes to unseen datapoints. 
For this section, let $d:\mathbb R^m \times\mathbb R^m\rightarrow \mathbb R$ be an distance metric (e.g. mean squared error) and let $x, \tilde x \in \mathbb{R}^m$ be a perturbed pair, where $\tilde x$ is a perturbed version of $x$. 

To be a reasonable threat model for adversarial examples, one desirable expectation is that a perturbation set should at least contain close approximations of the perturbed data. In other words, the set of perturbed data should be (approximately) a \emph{necessary subset} of the perturbation set. This notion of containment can be described more formally as follows: 
\begin{definition}
A perturbation set $\mathcal S(x)$ satisfies the \emph{necessary subset property} at approximation error at most $\delta$ for a perturbed pair $(x, \tilde x)$ if there exists an $x'\in \mathcal S(x)$ such that $d( x', \tilde x) \leq \delta$. 
\end{definition}
%For an entire dataset, we can report the mean approximation error that satisfies the necessary subset property. 
%Then, we can say a perturbation set $\mathcal S(x)$ satisfies the necessary subset property if the approximation error is acceptably small for all pairs of observations $(x, \tilde x)$. 
For a perturbation set defined by the generative model from Equation \eqref{eq:perturbation_set}, this amounts to finding a latent vector $z$ which best approximates the perturbed example $\tilde x$ by solving the following problem: 
\begin{equation}
\label{eq:necessary_subset}
\min_{\|z\|\leq \epsilon} d(g(z,x),\tilde x). 
\end{equation}
This approximation error can be upper bounded with point estimates or can be solved more accurately with projected gradient descent. Note that mathematically-defined perturbation sets such as $\ell_p$ balls around clean datapoints contain all possible observations and naturally have zero approximation error. 

Our second desirable property is specific to the probabilistic view from Equation \eqref{eq:probabilistic_perturbation_set}, where we would expect perturbed data to have a high probability of occurring under a probabilistic perturbation set. In other words, a perturbation set should assign \emph{sufficient likelihood} to perturbed data, described more formally in the following definition:  
\begin{definition}
A probabilistic perturbation set $\mathcal S(x)$ satisfies the \emph{sufficient likelihood property} at likelihood at least $\delta$ for a perturbed pair $(x,\tilde x)$ if $\mathbb E_{p_\epsilon(z)}[p_\theta(\tilde x)] \geq \delta$ where $\theta = g(z,x)$. 
\end{definition}
A model that assigns high likelihood to perturbed observations is likely to generate meaningful samples, which can then be used as a form of data augmentation in settings that care more about average-case over worst-case robustness. To measure this property, the likelihood can be approximated with a standard Monte Carlo estimate by sampling from the prior $p_\epsilon$. 

\section{Variational autoencoders for learning perturbations sets}
In this section we will focus on one possible approach using conditional variational autoencoders (CVAEs) to learn the perturbation set \citep{sohn2015learning}. We shift notation here to be consistent with the CVAE literature and consider a standard CVAE trained to generate $x\in \mathbb R^m$ from a latent space $z\in \mathbb R^k$ conditioned on some auxiliary variable $y$, which is traditionally taken to be a label. In our setting, the auxiliary variable $y$ is instead another datapoint such that $x$ is a perturbed version of $y$, but the theory we present is agnostic to the choice in auxiliary variable. Let the posterior distribution $q(z|x,y)$, prior distribution $p(z|y)$, and likelihood function $p(x|z,y)$ be the following multivariate normal distributions with diagonal variance: 
\begin{equation}
q(z|x,y) \sim \mathcal N(\mu(x,y), \sigma^2(x,y)), \quad p(z|y) \sim \mathcal N(\mu(y), \sigma^2(y)), \quad p(x|z,y) \sim \mathcal N(g(z,y), I)
\end{equation}
where $\mu(x,y)$, $\sigma^2(x,y)$, $\mu(y)$, $\sigma^2(y)$, and $g(z,y)$ are arbitrary functions representing the respective encoder, prior, and decoder networks. 
CVAEs are trained by maximizing a likelihood lower bound 
\begin{equation}
\log p(x|y) \geq \mathbb E_{q(z|x,y)}[\log p(x|z,y)] - KL(q(z|x,y) \| p(z|y))
\end{equation}
also known as the SGVB estimator, where $KL(\cdot\|\cdot)$ is the KL divergence. 
The CVAE framework lends to a natural perturbation set by simply restricting the latent space to an $\ell_2$ ball that is scaled and shifted by the prior network. 
For convenience, we will define the perturbation set in the latent space \emph{before} the reparameterization trick, so the latent perturbation set for all examples is a standard $\ell_2$ ball $\{ u : \|u\|_2 \leq \epsilon\}$ where $z = u\cdot \sigma(y) + \mu(y)$. Similarly, a probabilistic perturbation set can be defined by simply truncating the prior distribution at radius $\epsilon$ (also before the reparameterization trick). 

\subsection{Theoretical motivation of using CVAEs to learn perturbation sets}
Our theoretical results prove that optimizing the CVAE objective naturally results in both the necessary subset and sufficient likelihood properties outlined in Section \ref{sec:properties}, which motivates why the CVAE is a reasonable framework for learning perturbation sets. Note that these results are not immediately obvious, since the likelihood of the CVAE objective is taken over the full posterior while the perturbation set is defined over a constrained latent subspace determined by the prior. The proofs rely heavily on the multivariate normal parameterizations, with requiring several supporting results which relate the posterior and prior distributions. 
% for the latent space defined by an $\ell_2$ ball centered at the prior of a CVAE. 
We give a concise, informal presentation of the main theoretical results in this section, deferring the full details, proofs, and supporting results to Appendix \ref{app:proofs}. Our results are based on the minimal assumption that the CVAE objective has been trained to some threshold as described in Assumption \ref{ass:objective}. 
\begin{assumption}
\label{ass:objective}
The CVAE objective has been trained to some thresholds $R,K_i$ as follows 
$$\mathbb E_{q(z|x,y)}[\log p(x|z,y)] \geq R, \quad KL(q(z|x,y) \| p(z|y)) \leq \frac{1}{2}\sum_{i=1}^k K_i$$ 
where each $K_i$ bounds the KL-divergence of the $i$th dimension. 
\end{assumption}
Our first theorem, Theorem \ref{thm:cvae_short}, states that the approximation error of a perturbed example is bounded by the components of the CVAE objective. The implication here is that with enough representational capacity to optimize the objective, one can satisfy the necessary subset property by training a CVAE, effectively capturing perturbed data at low approximation error in the resulting perturbation set. 
\begin{theorem}
\label{thm:cvae_short}
%Suppose we have trained the lower bound to some thresholds $R,K_i$ 
%$$\mathbb E_{q(z|x,y)}[\log p(x|z,y)] \geq R, \quad KL(q(z|x,y) | p(z|y)) \leq \frac{1}{2}\sum_{i=1}^k K_i$$ 
%where each $K_i$ bounds the KL-divergence of the $i$th dimension. 
Let $r$ be the Mahalanobis distance which captures $1-\alpha$ of the probability mass for a $k$-dimensional standard multivariate normal for some $0 < \alpha < 1$. Then, there exists a $z$ such that $\left\|\frac{z - \mu(y)}{\sigma(y)}\right\|_2 \leq \epsilon$ and $\|g(z,y) - x\|^2_2 \leq \delta$ for 
$$\epsilon = Br + \sqrt{\sum_i K_i}, \quad \delta = -\frac{1}{1-\alpha}\left(2R + m\log(2\pi)\right)$$ 
where $B$ is a constant dependent on $K_i$. Moreover, as $R\rightarrow -\frac{1}{2}m\log(2\pi)$ and $K_i \rightarrow 0$ (the theoretical limits of these bounds\footnote{In practice, VAE architectures in general have a non-trivial gap from the approximating posterior which may make these theoretical limits unattainable.}), then $\epsilon \rightarrow r$ and $\delta \rightarrow 0$. 
\end{theorem}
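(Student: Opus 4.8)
The plan is to recast the statement as a \emph{probabilistic existence} argument carried out entirely under the posterior $q(z|x,y)$. I will show that the two inequalities of Assumption~\ref{ass:objective} respectively force (i) the \emph{expected} squared reconstruction error under $q$ to be at most $(1-\alpha)\delta$, and (ii) the posterior to place mass at least $1-\alpha$ on the latent ball $\mathcal B = \{z : \|(z-\mu(y))/\sigma(y)\|_2 \le \epsilon\}$. An averaging argument then guarantees that at least one $z \in \mathcal B$ achieves reconstruction error below $\delta$. Throughout I exploit that every distribution is a diagonal Gaussian, so the log-likelihood, the KL term, and the norm $\|(z-\mu(y))/\sigma(y)\|_2$ all decompose coordinatewise.

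For the reconstruction side, I would substitute the Gaussian likelihood $p(x|z,y)=\mathcal N(g(z,y),I)$, which gives $\log p(x|z,y) = -\tfrac12\|g(z,y)-x\|_2^2 - \tfrac{m}{2}\log(2\pi)$. Taking expectations under $q$ and invoking $\mathbb E_{q}[\log p(x|z,y)] \ge R$ yields $\mathbb E_{q}[\|g(z,y)-x\|_2^2] \le -2R-m\log(2\pi)$. The choice $\delta = -\tfrac{1}{1-\alpha}(2R+m\log(2\pi))$ is exactly what rewrites this as $\mathbb E_{q}[\|g(z,y)-x\|_2^2] \le (1-\alpha)\delta$; note that $\log p(x|z,y)\le -\tfrac{m}{2}\log(2\pi)$ forces $R \le -\tfrac{m}{2}\log(2\pi)$, so $\delta \ge 0$.

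The latent side is where I expect the real work. Writing $w = (z-\mu(x,y))/\sigma(x,y)$, which is standard normal under $q$, a coordinatewise triangle inequality gives
\[ \left\|\frac{z-\mu(y)}{\sigma(y)}\right\|_2 \le \sqrt{\sum_i \left(\frac{\sigma_i(x,y)}{\sigma_i(y)}\right)^2 w_i^2} \;+\; \sqrt{\sum_i \left(\frac{\mu_i(x,y)-\mu_i(y)}{\sigma_i(y)}\right)^2}. \]
The mean-shift term is controlled by the per-coordinate KL: since the Gaussian KL reads $\mathrm{KL}_i = \tfrac12\big(t_i + (\mu_i(x,y)-\mu_i(y))^2/\sigma_i(y)^2 - 1 - \log t_i\big)$ with $t_i := \sigma_i(x,y)^2/\sigma_i(y)^2$, and since $t_i-1-\log t_i \ge 0$, we get $(\mu_i(x,y)-\mu_i(y))^2/\sigma_i(y)^2 \le 2\,\mathrm{KL}_i \le K_i$, so the mean-shift term is at most $\sqrt{\sum_i K_i}$. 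The scaling term is at most $B\|w\|_2$ once each ratio $\sigma_i(x,y)/\sigma_i(y)\le B$ is bounded, which again follows from the KL constraint: $t_i$ must satisfy $t_i - \log t_i \le 1+K_i$, confining $t_i$ to a bounded interval whose upper endpoint defines the constant $B$. With $\epsilon = Br + \sqrt{\sum_i K_i}$, the event $\{\|w\|_2 \le r\}$ is then contained in $\mathcal B$, and by the definition of $r$ it has probability exactly $1-\alpha$, so $q(\mathcal B)\ge 1-\alpha$. Keeping the factor-of-two bookkeeping between $K_i$ and the coordinate KL straight, and verifying that $t_i \mapsto t_i - \log t_i$ yields a finite $B$ depending only on the $K_i$, is the crux and is where the supporting lemmas relating prior and posterior enter.

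Finally I would combine the two pieces by contradiction. If no $z \in \mathcal B$ satisfied $\|g(z,y)-x\|_2^2 \le \delta$, then $\|g(z,y)-x\|_2^2 > \delta$ throughout $\mathcal B$, whence
\[ \mathbb E_{q}[\|g(z,y)-x\|_2^2] \;\ge\; \int_{\mathcal B}\|g(z,y)-x\|_2^2\,dq \;>\; \delta\, q(\mathcal B) \;\ge\; (1-\alpha)\delta, \]
contradicting the reconstruction bound; hence the desired $z$ exists. The limiting claims are then immediate: as $K_i \to 0$ the interval for $t_i$ collapses to $\{1\}$ so $B\to 1$ and $\sqrt{\sum_i K_i}\to 0$, giving $\epsilon \to r$, and as $R \to -\tfrac{m}{2}\log(2\pi)$ the numerator of $\delta$ vanishes, giving $\delta \to 0$.
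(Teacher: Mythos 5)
Your proposal is correct and takes essentially the same route as the paper's proof: the same rearrangement of the Gaussian log-likelihood into the bound $\mathbb E_q[\|g(z,y)-x\|_2^2]\le -2R-m\log(2\pi)$, the same triangle-inequality decomposition of $\|(z-\mu(y))/\sigma(y)\|_2$ with KL-derived bounds on the mean shift and on the variance ratio $t_i$ (the paper's Lemmas \ref{lem:lambertw} and \ref{lem:kl}), and the same Mahalanobis-ball averaging-by-contradiction argument (the paper's Lemma \ref{lem:normal}). The only difference is ordering: the paper first extracts a low-error point with $\|u\|_2\le r$ and then shows it lands in the $\epsilon$-ball, whereas you first show the $r$-ball maps into the $\epsilon$-ball and then average over it---the two are equivalent.
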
 
Our second theorem, Theorem \ref{thm:cvaesuperset_short}, states that the expected approximation error over the truncated prior can also be bounded by components of the CVAE objective. Since the generator $g$ parameterizes a multivariate normal with identity covariance, an upper bound on the expected reconstruction error implies a lower bound on the likelihood. This implies that one can also satisfy the sufficient likelihood property by training a CVAE, effectively learning a probabilistic perturbation set that assigns high likelihood to perturbed data.
\begin{theorem}
\label{thm:cvaesuperset_short}
Let $r$ be the Mahalanobis distance which captures $1-\alpha$ of the probability mass for a $k$-dimensional standard multivariate normal for some $0 < \alpha < 1$. Then, the \emph{truncated expected approximation error} can be bounded with 
$$\mathbb E_{p_r(u)}\left[\|g(u\cdot\sigma(y) + \mu(y),y) - x\|_2^2\right] \leq - \frac{1}{1-\alpha}(2R + m \log(2\pi))H$$
where $p_r(u)$ is a multivariate normal that has been truncated to radius $r$ and $H$ is a constant that depends exponentially on $K_i$ and $r$.
\end{theorem}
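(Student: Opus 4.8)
The plan is to reduce the theorem to a bound on the expected squared reconstruction error under the \emph{posterior}, which is directly controlled by Assumption~\ref{ass:objective}, and then transfer this bound to the truncated prior via a change of measure whose density ratio is controlled by the per-coordinate KL bounds.

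First I would unpack the reconstruction term. Since $p(x|z,y)\sim\mathcal N(g(z,y),I)$, its log-density is $\log p(x|z,y) = -\tfrac12\|g(z,y)-x\|_2^2 - \tfrac{m}{2}\log(2\pi)$. Substituting into $\mathbb E_{q(z|x,y)}[\log p(x|z,y)]\geq R$ and rearranging yields the posterior reconstruction bound
\begin{equation}
\mathbb E_{q(z|x,y)}\left[\|g(z,y)-x\|_2^2\right] \leq -2R - m\log(2\pi),
\end{equation}
and I will call this right-hand side $D$. Next I would express the target quantity as an expectation under the posterior by inserting the density ratio. Writing $f(z)=\|g(z,y)-x\|_2^2\geq 0$, $u=(z-\mu(y))/\sigma(y)$, and letting $\tilde p$ denote the prior $p(z|y)$ truncated to $\{\|u\|_2\leq r\}$ and renormalized, the defining property of $r$ (a standard normal places mass $1-\alpha$ on the radius-$r$ ball) gives $\tilde p(z)=\tfrac{1}{1-\alpha}\,p(z|y)\,\mathbf{1}[\|u\|_2\leq r]$, so that
\begin{equation}
\mathbb E_{p_r(u)}[f] = \mathbb E_{q(z|x,y)}\!\left[f(z)\,\frac{\tilde p(z)}{q(z|x,y)}\right] = \frac{1}{1-\alpha}\,\mathbb E_{q(z|x,y)}\!\left[f(z)\,\frac{p(z|y)}{q(z|x,y)}\,\mathbf{1}[\|u\|_2\leq r]\right].
\end{equation}

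The hard part is bounding the density ratio $p(z|y)/q(z|x,y)$ by a constant $H$ uniformly over the truncation ball, using only the per-coordinate KL constraints. Because both densities are diagonal Gaussians, the ratio factorizes across coordinates, and on the support (where $z_i-\mu_i(y)=u_i\sigma_i(y)$ and $\sum_i u_i^2\leq r^2$) each factor equals $\tfrac{\sigma_i(x,y)}{\sigma_i(y)}\exp\!\big(-\tfrac12 u_i^2 + \tfrac{(z_i-\mu_i(x,y))^2}{2\sigma_i(x,y)^2}\big)$. I would invoke the supporting prior/posterior comparison results of Appendix~\ref{app:proofs} to bound, per coordinate, both the variance ratio $\sigma_i(y)/\sigma_i(x,y)$ and the standardized mean gap $|\mu_i(x,y)-\mu_i(y)|/\sigma_i(y)$ in terms of $K_i$; these follow because the univariate KL $\tfrac12 K_i \geq \log\tfrac{\sigma_i(y)}{\sigma_i(x,y)} + \tfrac{\sigma_i(x,y)^2+(\mu_i(x,y)-\mu_i(y))^2}{2\sigma_i(y)^2} - \tfrac12$ simultaneously forces the log-variance ratio and the mean gap to be small. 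Substituting these into each factor gives a quadratic in $u_i$ whose maximum over $\|u\|_2\leq r$ is finite; taking the product over the $k$ coordinates produces the claimed constant $H$, exponential in the $K_i$ and in $r^2$, the $r$-dependence arising precisely from exponentiating the quadratic form on the radius-$r$ ball. This is the main obstacle, since the worst-case ratio estimate must be tight enough to remain finite while interacting cleanly with the bounded KL budget.

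Finally I would assemble the pieces. Since $f\geq 0$ and the indicator only restricts the integration region, applying the uniform ratio bound $p(z|y)/q(z|x,y)\leq H$ and then dropping the indicator gives
\begin{equation}
\mathbb E_{p_r(u)}[f] \leq \frac{H}{1-\alpha}\,\mathbb E_{q(z|x,y)}[f(z)] \leq \frac{H}{1-\alpha}\,D = -\frac{1}{1-\alpha}\big(2R + m\log(2\pi)\big)H,
\end{equation}
which is exactly the stated bound.
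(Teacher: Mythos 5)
Your proposal is correct and takes essentially the same approach as the paper's proof: both reduce the truncated-prior expectation to the posterior reconstruction bound $-2R - m\log(2\pi)$ via a change of measure, and both obtain the constant $H$ by bounding the prior-to-posterior density ratio on the truncated region coordinate-wise, using the KL-controlled mean gap and variance ratio and maximizing the resulting quadratic exponent over the radius-$r$ region (the paper's Lemma \ref{lem:densityratio} and Corollary \ref{cor:densityratio}). The only cosmetic difference is that you carry out the change of measure directly in $z$-coordinates with the ratio $p(z|y)/q(z|x,y)$, whereas the paper routes the same computation through an explicit reparameterization to the prior followed by an un-reparameterization to the posterior.
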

The main takeaway from these two theorems is that optimizing the CVAE objective naturally results in a learned perturbation set which satisfies the necessary subset and sufficient likelihood properties. The learned perturbation set is consequently useful for adversarial robustness since the necessary subset property implies that the perturbation set does not ``miss'' perturbed data. It is also useful for data augmentation since the sufficient likelihood property ensures that perturbed data occurs with high probability. We leave further discussion of these two theorems to Appendix \ref{app:proof_discussion}.

\begin{table}[t]
  \caption{Condensed evaluation of CVAE perturbation sets trained to produce rotation, translation, ans skew transformations on MNIST (MNIST-RTS), CIFAR10 common corruptions (CIFAR10-C) and multi-illumination perturbations (MI). The approximation error measures the necessary subset property, and the expected approximation error measures the sufficient likelihood property. }
  \label{table:all_evaluate}
  \centering
  \begin{tabular}{lrrrr}
    \toprule
    Setting                    & Approx. error & Expected approx. error & CVAE Recon. error & KL \\%& LL \\
    \midrule
    MNIST-RTS & $0.11$ & $0.54$ & $0.04$ & $22.2$ \\
    CIFAR10-C &  $0.005$ & $0.029$ & $0.001$ & $69.3$ \\%& $71$ \\
    MI & $0.006$ & $0.049$ & $0.004$ & $65.8$ \\
    \bottomrule
  \end{tabular}
\end{table}

\section{Experiments}
Finally, we present a variety of experiments to showcase the generality and effectiveness of our perturbation sets learned with a CVAE. Our experiments for each dataset can be broken into two distinct types: the generative modeling problem of learning and evaluating a perturbation set, and the robust optimization problem of learning an adversarially robust classifier to this perturbation. We note that our approach is broadly applicable, has no specific requirements for the encoder, decoder, and prior networks, and avoids the unstable training dynamics found in GANs. Furthermore, we do not have the blurriness typically associated with VAEs since we are modeling perturbations and not the underlying image. 
%Additionally, the CVAE can produce non-blurry images since it only needs to apply a perturbation and not reconstruct the entire image. 
All code, configuration files, and pretrained model weights for reproducing our experiments are at \url{https://github.com/locuslab/perturbation_learning}. 
%As a result, our CVAEs can produce non-blurry images, unlikely in typical VAE settings. %Finally, components of the likelihood lower bound (reconstruction error and KL divergence) can be used for additional model evaluation. 

In all settings, we first train perturbation sets and evaluate them with a number of metrics averaged over the test set. We present a condensed version of these results in Table \ref{table:all_evaluate}, which establishes a quantitative baseline for learning real-world perturbation sets in three benchmark settings that future work can improve upon. Specifically, the approximation error measures the necessary subset property, the expected approximation error measures the sufficient likelihood property, and the reconstruction error and KL divergence are standard CVAE metrics. The full evaluation is described in Appendix \ref{app:perturbation}, and a complete tabulation of our results on evaluating perturbation sets can be found in Tables \ref{table:mnist_evaluate}, \ref{table:cifar10c_evaluate}, and \ref{table:mi_evaluate} in the appendix for each setting. 
%To evaluate the necessary subset property, we use a cheap upper bound using the encoder (Enc. AE) and a refined upper bound using projected gradient descent (PGD AE). To evaluate the sufficient likelihood property, we calculate the expected approximation error (EAE) with Monte Carlo estimation. 
%To give a sense of how much a perturbation can diverge within the set, we calculate the overapproximation error (OAE) by maximizing the approximation error with projected gradient descent. Finally, we report CVAE metrics, the reconstruction error over the full posterior and the KL divergence. 
%The final radius $\epsilon$ of the perturbation set is conservatively selected as the maximum $\ell_2$ norm of the posterior mean unparameterized by the prior distribution, which is calculated over a held-out validation set. 
%Further details and discussion for additional metrics, choices in parameters, and auxiliary experiments on learning perturbation sets can be found in Appendix \ref{app:perturbation}. 

We then leverage our learned perturbation sets into new downstream robustness tasks by simply using standard, well-vetted techniques for $\ell_2$ robustness directly on the latent space, namely adversarial training with an $\ell_2$ PGD adversary \citep{madry2017towards}, a certified defense with $\ell_2$ randomized smoothing \citep{cohen2019certified}, as well as an additional data-augmentation baseline via sampling from the CVAE truncated prior. Pseudo-code for these approaches can be found in Appendix \ref{app:pseudocode}. 
%Performance on the original and perturbed dataset is denoted as  clean and perturbed accuracy respectively. Empirical and certified adversarial performance is denoted as robust and certified accuracy respectively. 
We defer the experiments on MNIST to Appendix \ref{app:mnist}, and spend the remainder of this section highlighting the main empirical results for the CIFAR10 and Multi-Illumination settings. Additional details and supplementary experiments can be found in Appendix \ref{app:cifar10c} for CIFAR10 common corruptions, and Appendix \ref{app:mi} for the multi-illumination dataset. 
%\subsection{Data augmentation}
%
%\begin{figure}[t]
%  \centering
%  \includegraphics[height=0.5in]{}
%  \caption{$\ell_\infty$ (left) and RTS (right) samples from a learned perturbation set for MNIST.}
%  \label{fig:mnist_samples_onerow}
%\end{figure}

\subsection{CIFAR10 common corruptions}
In this section, we first learn a perturbation set which captures common image corruptions for CIFAR10 \citep{hendrycks2019benchmarking}.\footnote{We note that this is not the original intended use of the dataset, which was proposed as a general measure for evaluating robustness. Instead, we are using the dataset for a different setting of learning perturbation sets.} We focus on the highest severity level of the blur, weather, and digital categories, resulting in 12 different corruptions which capture more ``natural'' corruptions that are unlike random $\ell_p$ noise (a complete description of the setting is in Appendix \ref{app:cifar10c}). 
We find that the resulting perturbation set accurately captures common corruptions with a mean approximation error of 0.005 as seen in Table \ref{table:all_evaluate}. A more in-depth quantitative evaluation as well as architecture and training details are in Appendix \ref{app:cifar10c_perturbation_set}. 
%Unlike in the MNIST setting, the exact perturbation set is no longer mathematically-defined. %, and so we will need to rely on the measures of quality discussed in Section \ref{sec:properties} in order to evaluate the perturbation set. 
%Specific architecture, training, and evaluation details can be found in Appendix \ref{app:cifar10c_perturbation_set}. 

%\paragraph{Selecting a radius} In order to select a radius for the latent space and constrain the perturbation model, we select the smallest radius which contains the encoded latent vectors of all perturbations in the validation set. For evaluation, we report results at three radius thresholds which contain $25$, $50$, and $75$ percent of the validation latent vectors. More details and statistics regarding the latent space encodings can be found in Table \ref{table:cifar10c_statistics} in Appendix \ref{app:cifar10_latent}. 

%\begin{figure}[t]
%  \centering
%  \includegraphics[height=0.5in]{}
%  \caption{CVAE interpolations between fog (left), defocus blur (middle), and pixelate (right) corruptions as distinct, representative corruptions from the weather, blur, and digital categories.}
%  \label{fig:cifar10c_interpolations_onerow}
%\end{figure}
 
\begin{figure}[t]
  \centering
  \includegraphics[scale=0.95]{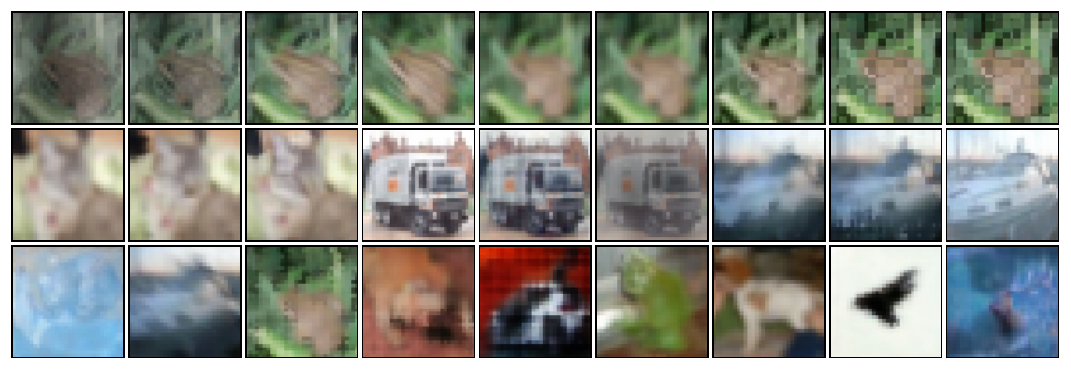}
  \caption{Visualization of a learned perturbation set trained on CIFAR10 common corruptions. (top row) Interpolations from fog, through defocus blur, to pixelate corruptions. (middle row) Random corruption samples for three examples. (bottom row) Adversarial corruptions that misclassify an adversarially trained classifier at $\epsilon=10.2$.}
%  \label{fig:cifar10c_adversarial_onerow}
  \label{fig:cifar10c_compact}
  \vspace{-0.1in}
\end{figure}
 
\begin{table}[t]
  \caption{Adversarial robustness to CIFAR10 common corruptions with a CVAE perturbation set.}
  \label{table:cifar10c_robustness}
  \centering
  \begin{tabular}{lrrrrrr}
    \toprule
    & \multicolumn{3}{c}{Test set accuracy $(\%)$} & \multicolumn{3}{c}{Test set robust accuracy $(\%)$}                   \\
    \cmidrule(r){2-4} \cmidrule(r){5-7}
    Method                    & Clean & Perturbed & OOD & $\epsilon=2.7$ & $\epsilon=3.9$ & $\epsilon=10.2$\\
    \midrule
    CIFAR10-C data augmentation             & $90.6$ & $87.7$ & $85.0$ & $42.4$ & $37.2$ & $17.8$ \\
    CVAE data augmentation        & $94.5$ & $\mathbf{90.5}$ & $89.6$ & $68.6$ & $63.3$ & $43.4$\\
    CVAE adversarial training     & $94.6$ & $90.3$ & $\mathbf{89.9}$ & $\mathbf{72.1}$ & $\mathbf{66.1}$ & $\mathbf{55.6}$\\
    \midrule
    Standard training           & $\mathbf{95.2}$ & $67.0$ & $68.1$ & $20.1$ & $17.8$ & $10.1$\\
    AugMix \citep{hendrycks2019augmix} & $92.0$ & $68.8$ & $82.9$ & $39.5$ & $34.4$ & $16.8$\\
    $\ell_2$ robust \citep{robustness} & $90.8$ & $74.4$ & $82.8$ & $58.4$ & $48.1$ & $20.6$\\
    $\ell_\infty$ robust \citep{carmon2019unlabeled} & $89.7$ & $71.2$ & $80.3$ & $60.2$ & $50.7$ & $23.6$ \\
    \bottomrule
  \end{tabular}
\end{table}

We qualitatively evaluate our perturbation set in Figure \ref{fig:cifar10c_compact}, which depicts interpolations, random samples, and adversarial examples from the perturbation set. For additional analysis of the perturbation set, we refer the reader to Appendix \ref{app:cifar10_pairing} for a study on using different pairing strategies during training and Appendix \ref{app:cifar10_latent} which finds semantic latent structure and visualizes additional examples. 
%We begin by training the CVAE on pairs $(x,\tilde x)$ where $x$ is an original image and $\tilde x$ is a corrupted version of $x$, and compare it to training on less structured pairs where $(x,\tilde x)$ are potentially both corruptions of the same image. The result, which is tabulated in Table \ref{table:cifar10c_evaluate} of Appendix \ref{app:cifar10_pairing}, is that conditioning on the original image learns a perturbation set with significantly better quality metrics. We additionally find that the latent space has semantic structure, where latent perturbations can be separated into their semantic types by their $\ell_2$ norm as visualized in Figure \ref{fig:cifar10c_latent} of Appendix \ref{app:cifar10_latent}. 
%The latent space can also generate reasonable interpolations and random samples, with several examples plotted in Figure \ref{fig:cifar10c_compact} and additional examples in Appendix \ref{app:cifar10_latent}. In total, these examples demonstrate how the perturbation set has learned a meaningful latent structure with reasonable combinations and variations of corruptions that have not been explicitly seen. 

\paragraph{Robustness to corruptions}
We next employ the perturbation set in adversarial training and randomized smoothing to learn models which are robust against worst-case CIFAR10 common corruptions. We report results at three radius thresholds $\{2.7, 3.9, 10.2\}$ which correspond to the $25$th, $50$th, and $75$th percentiles of latent encodings as described in Appendix \ref{app:cifar10_latent}. We compare to two data augmentation baselines of training on perturbed data or samples drawn from the learned perturbation set, and also evaluate performance on three extra out-of-distribution corruptions (one for each weather, blur, and digital category denoted OOD) that are not present during training. 

We highlight some empirical results in Table \ref{table:cifar10c_robustness}, where we first find that training with the CVAE perturbation set can improve generalization. Specifically, using the CVAE perturbation set during training achieves $3-5\%$ improved accuracy over training directly on the common corruptions (data augmentation) across all non-adversarial metrics. 
These gains motivate learning perturbation sets beyond the setting of worst-case robustness as a way to improve standard generalization.  Additionally, the CVAE perturbation set improves worst-case performance, with the adversarially trained model being the most robust at $66\%$ robust accuracy for $\epsilon=3.9$ whereas pure data augmentation only achieves $17.8\%$ robust accuracy. 
%Several adversarial examples are shown in Figure \ref{fig:cifar10c_compact}, with additional examples plotted in Figure \ref{fig:cifar10c_adversarial} of Appendix \ref{app:cifar10c_robust}. 
Finally, we include a comparison to models trained with standard training, AugMix data augmentation, $\ell_\infty$ adversarial training, and $\ell_2$ adversarial training, none which can perform as well as our CVAE approach. We note that this is not too surprising, since these approaches have different goals and data assumptions. Nonetheless, we include these results for the curious reader, with additional details and discussion in Appendix \ref{app:baselines}. For certifiably robust models with randomized smoothing, we defer the results and discussion to Appendix \ref{app:cifar10c_certified}.

\begin{figure}[t]
  \centering
  \includegraphics[scale=1]{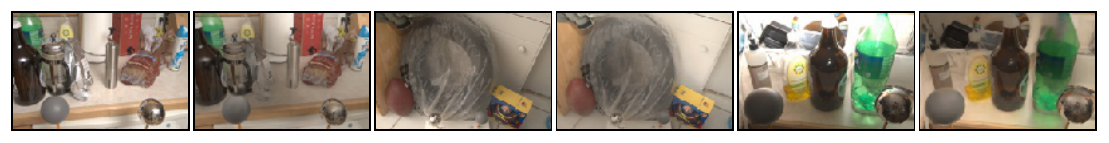}
  \caption{Pairs of MI scenes (left) and their adversarial lighting perturbations (right).}
  \label{fig:mi_adversarial_onerow}
  \vspace{-0.1in}
\end{figure}

\begin{table}[t]
  \caption{Learning image segmentation models that are robust to real-world changes in lighting with a CVAE perturbation set. }
  \label{table:mi_robustness}
  \centering
  \begin{tabular}{lrrrrrr}
    \toprule
    & \multicolumn{1}{c}{Test set accuracy (\%)} & \multicolumn{3}{c}{Test set robust accuracy (\%)}\\
    \cmidrule(r){2-2} \cmidrule(r){3-5}
    Method                    & Perturbed & $\epsilon=7.35$ & $\epsilon=8.81$ & $\epsilon=17$\\
    \midrule
    Fixed lighting angle      & $37.2$ & $26.3$ & $24.2$ & $14.9$\\
    MI data augmentation         & $45.2$ & $38.0$ & $36.5$ & $27.1$\\
    CVAE data augmentation    & $41.5$ & $35.5$ & $33.9$ & $24.7$\\
    CVAE adversarial training & $41.7$ & $39.4$ & $38.8$ & $35.4$\\
    \bottomrule
  \end{tabular}
\end{table}

\subsection{Multi-illumination}
Our last set of experiments looks at learning a perturbation set of multiple lighting conditions using the Multi-Illumination (MI) dataset \citep{murmann2019dataset}. These consist of a thousand scenes captured in the wild under 25 different lighting variations, and our goal is to learn a perturbation set which captures real-world lighting conditions. Since the process of learning lighting variations is largely similar to the image corruptions setting, we defer most of the discussion on learning and evaluating the CVAE-based lighting perturbation set to Appendix \ref{app:mi_perturbation_set}. Note that our perturbation set accurately captures real-world changes in lighting with a low approximation error of 0.006 as seen in Table \ref{table:all_evaluate}. 
We qualitatively evaluate our perturbation set by depicting adversarial examples in Figure \ref{fig:mi_adversarial_onerow}, with more visualizations (including samples and interpolations) in Appendix \ref{app:mi_visualizations}.

\paragraph{Robustness to lighting perturbations} We devote the remainder of this section to studying the task of generating material segmentation maps which are robust to lighting perturbations, using our CVAE perturbation set. We highlight that adversarial training improves robustness to worst-case lighting perturbations over directly training on the perturbed examples, increasing robust accuracy from $27.1\%$ to $35.4\%$ at the maximum radius $\epsilon=17$. Additional results on certifiably robust models with randomized smoothing can be found in Appendix \ref{app:mi_robustness}. 

%Additionally, the certifiably robust network obtains $30.7\%$ perturbed accuracy and $12.4\%$ certified accuracy at $\epsilon =7.35$, a non-trivial robustness guarantee to real lighting changes. The full table of results, complete certification curves, and further discussion and details are in Appendix \ref{app:mi_robustness}.

%Our main findings are that  with the main empirical results summarized in Table \ref{table:mi_robustness} in Appendix \ref{app:mi_robustness}. Robustness of the material prediction for each pixel is reported at the maximum radius $\epsilon=17$ as well as the $50$th and $75$th percentiles $\epsilon \in \{7.35, 8.81\}$, where we find that adversarial training with the CVAE perturbation set improves robustness to worst-case lighting perturbations, achieving $35.4\%$ robust accuracy at $\epsilon=17$. We additionally train a certifiably robust network with randomized smoothing, which obtains $30.7\%$ perturbed accuracy and $12.4\%$ certified accuracy at $\epsilon =7.35$. Additional details and discussion of these results are in Appendix \ref{app:mi_robustness}. 

\section{Conclusion}
In this paper, we presented a general framework for learning perturbation sets from data when the perturbation cannot be mathematically-defined. We outlined deterministic and probabilistic properties that measure how well a perturbation set fits perturbed data, and formally proved that a perturbation set based upon the CVAE framework satisfies these properties. This work establishes a principled baseline for learning perturbation sets with quantitative metrics which future work can potentially improve upon, e.g. by using a different generative modeling frameworks. The resulting perturbation sets open up new downstream robustness tasks such as adversarial and certifiable robustness to common image corruptions and lighting perturbations, while also potentially improving non-adversarial robust performance to natural perturbations. Our work opens a pathway for practitioners to learn machine learning models that are robust to targeted, real-world perturbations that can be collected as data. 

\bibliography{iclr2021_conference}
\bibliographystyle{iclr2021_conference}

% APPENDIX
\appendix
\section{Appendix}
\section{Theoretical results}
In this section, we present the theoretical results in their full detail and exposition. Both of the main theorems presented in this work require a number of preceding results in order to formally link the prior and the posterior distribution based on their KL divergence. We will present and prove these supporting results before proving each main theorem. 

\label{app:proofs}
\subsection{Proof of Theorem \ref{thm:cvae}}
Theorem \ref{thm:cvae} connects the CVAE objective to the necessary subset property. In order to prove this, we first prove three supporting lemmas. Lemma \ref{lem:normal} states that if the expected value of a function over a normal distribution is low, then for any fixed radius there must exist a point within the radius with proportionally low function value. This leverages the fact that the majority of the probability mass is concentrated around the mean and can be characterized by the Mahalanobis distance. 
\begin{lemma}
\label{lem:normal}
Let $f(u):\mathbb R^k \rightarrow \mathbb R_+$ be a non-negative integrable function, and let $\mathcal N_k(0,I)$ be a $k$-dimensional standard multivariate normal random variable with zero mean and identity covariance. Suppose $\mathbb E_{u \sim \mathcal N(0,I)}[f(u)] \leq \delta$ for some $\delta > 0$. Then, for any pair $(r,\alpha)$ where $r$ is the Mahalanobis distance which captures $1-\alpha$ of the probability mass of $\mathcal N_k(0,I)$, there exists a $u$ such that $\|u\|_2 \leq r$ and $f(u) \leq \frac{\delta}{1-\alpha}$. 
\end{lemma}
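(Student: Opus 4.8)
The plan is to use an elementary conditional-expectation (averaging) argument. Since $f$ is non-negative and its average under the full standard normal is at most $\delta$, its average conditioned on the ball $\{\|u\|_2 \le r\}$ can be inflated by at most the factor $1/(1-\alpha)$ coming from the probability mass of that ball; an average this small then forces the existence of a point in the ball with a correspondingly small value of $f$.

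First I would use the definition of $r$ to record that $\Pr_{u\sim\mathcal N(0,I)}[\|u\|_2 \le r] = 1-\alpha$. Next I would expand the conditional expectation of $f$ over the ball,
\[
\mathbb E\big[f(u)\mid \|u\|_2 \le r\big] = \frac{\mathbb E\big[f(u)\,\mathbf{1}[\|u\|_2 \le r]\big]}{\Pr[\|u\|_2 \le r]} = \frac{\mathbb E\big[f(u)\,\mathbf{1}[\|u\|_2 \le r]\big]}{1-\alpha}.
\]
Because $f \ge 0$, inserting the indicator only discards non-negative mass, so $\mathbb E[f(u)\,\mathbf{1}[\|u\|_2 \le r]] \le \mathbb E[f(u)] \le \delta$, and therefore
\[
\mathbb E\big[f(u)\mid \|u\|_2 \le r\big] \le \frac{\delta}{1-\alpha}.
\]

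Finally I would invoke the basic fact that a random variable cannot exceed its own mean almost surely: were $f(u) > \delta/(1-\alpha)$ to hold for almost every $u$ in the ball, the conditional expectation would strictly exceed $\delta/(1-\alpha)$, a contradiction. Hence $f(u) \le \delta/(1-\alpha)$ on a set of positive conditional probability, which contains an actual point $u$ with $\|u\|_2 \le r$. The only step needing minor care---and the nearest thing to an obstacle---is turning ``positive probability'' into ``there exists a point'': since the truncated standard normal has strictly positive density throughout the open ball, every event of positive probability is realized by a genuine $u$, so no measure-theoretic subtlety arises, and the rest of the argument is entirely routine.
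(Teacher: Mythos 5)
Your proof is correct and is essentially the same argument as the paper's: both bound $\mathbb E\bigl[f(u)\,\mathbf{1}[\|u\|_2\le r]\bigr]$ by $\delta$ using non-negativity of $f$, divide by the ball's mass $1-\alpha$, and conclude a point in the ball must have $f(u)\le \delta/(1-\alpha)$. The paper merely phrases this averaging step as a proof by contradiction (assuming $f>\delta/(1-\alpha)$ on the whole ball and deriving $\mathbb E[f]>\delta$), whereas you argue directly via the conditional expectation; the content is identical.
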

\begin{proof}
We will prove this by contradiction. Assume for sake of contradiction that for all $u$ such that $\|u\|_2\leq r$, we have $f(u)  > \frac{\delta}{1-\alpha}$. We divide the expectation into two integrals over the inside and outside of the Mahalanobis ball: 
\begin{equation}
\mathbb E_{u \sim \mathcal N(0,I)}\left[f(u) \right] = \int_{\|u\|_2\leq r}f(u) p(u)du + \int_{\|u\|_2 > r}f(u) p(u)du
\end{equation}
Using the assumption on the first integral and non-negativity of $f$ in the second integrand, we can conclude
\begin{equation}
\mathbb E_{u \sim \mathcal N(0,I)}\left[f(u)\right] > \int_{\|u\|_2\leq r} \frac{\delta}{1-\alpha}p(u) du = \delta
\end{equation}
where the equality holds by definition of the Mahalanobis distance, which contradicts the initial assumption that $\mathbb E_{u \sim \mathcal N(0,I)}\left[f(u)\right] \leq \delta$. Thus, we have proven by contradiction that there exists a $u$ such that $\|u\|_2\leq r$ and $f(u) \leq \frac{\delta}{1-\alpha}$.
\end{proof}

Our second lemma, Lemma \ref{lem:lambertw}, is an important result that comes from the algebraic form of the KL divergence. It is needed to connect the bound on the KL divergence to the actual variances of the prior and the posterior distribution, and uses the LambertW function to do so. 
\begin{lemma}
\label{lem:lambertw}
Let $W_k$ be the LambertW function with branch $k$. Let $x > 0$, and suppose $x - \log x \leq y$. Then, $x \in [a,b]$ where $a = -W_0(-e^{-y})$ and $b=-W_{-1}(-e^{-y})\}$. Additionally, these bounds coincide at $x=1$ when $y=1$. 
\end{lemma}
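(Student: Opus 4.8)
The plan is to analyze the auxiliary function $h(x) = x - \log x$ on $(0,\infty)$ and then invert it explicitly via the defining relation of the LambertW function. First I would establish the shape of $h$: differentiating gives $h'(x) = 1 - 1/x$, which is negative on $(0,1)$ and positive on $(1,\infty)$, so $h$ is strictly decreasing and then strictly increasing with a unique global minimum $h(1) = 1$. Since $h(x) \to +\infty$ both as $x \to 0^+$ and as $x \to +\infty$, the sublevel set $\{x > 0 : h(x) \le y\}$ is nonempty precisely when $y \ge 1$, and by monotonicity together with the intermediate value theorem it is a closed interval $[a,b]$ with $a \le 1 \le b$, whose endpoints are exactly the two solutions of $h(x) = y$ on the decreasing and increasing branches respectively.

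Next I would solve the boundary equation $x - \log x = y$ in closed form. Rearranging gives $\log x = x - y$, hence $x = e^{x-y}$, and multiplying through by $-e^{-x}$ yields $-x\,e^{-x} = -e^{-y}$. By the defining property $w e^{w} = z \iff w = W(z)$ of the LambertW function, setting $w = -x$ shows $-x = W(-e^{-y})$, i.e. $x = -W(-e^{-y})$. This recovers both roots at once, with the choice of branch encoding which root we obtain.

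The key step is then to match each root to the correct branch. For $y > 1$ the argument satisfies $-e^{-y} \in (-1/e, 0)$, the interval on which the two real branches are defined: the principal branch with $W_0 \in (-1,0)$ and the lower branch with $W_{-1} \in (-\infty,-1)$. The smaller root $a \le 1$ corresponds to $-a = W_0(-e^{-y}) \in (-1,0)$, giving $a = -W_0(-e^{-y})$, while the larger root $b \ge 1$ corresponds to $-b = W_{-1}(-e^{-y}) \in (-\infty,-1)$, giving $b = -W_{-1}(-e^{-y})$. I expect this branch identification — verifying rigorously which branch yields the root below $1$ and which yields the root above $1$ — to be the main subtlety, since it is where the sign conventions and domains of $W_0$ and $W_{-1}$ must be handled carefully. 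Finally, the coincidence at $y = 1$ follows immediately because $-e^{-1} = -1/e$ is exactly the branch point where $W_0(-1/e) = W_{-1}(-1/e) = -1$, so $a = b = 1$.
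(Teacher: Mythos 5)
Your proposal is correct and follows essentially the same route as the paper's proof: invert $x - \log x = y$ via the defining relation of the LambertW function, use the shape of $x - \log x$ (convex with minimum value $1$ at $x=1$) to identify the sublevel set as the interval between the two roots, and observe that both branches meet at the branch point $-1/e$ when $y=1$. The only difference is that you fill in details the paper leaves implicit — the explicit algebraic derivation $-xe^{-x} = -e^{-y}$ and the careful matching of the smaller root to $W_0$ and the larger root to $W_{-1}$ — which strengthens rather than changes the argument.
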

\begin{proof}
The LambertW function $W_k(y)$ is defined as the inverse function of $y = xe^x$, where the path to multiple solutions is determined by the branch $k$. We can then write the inverse of $y = x - \log x$ as one of the following two solutions: 
\begin{equation}
x \in \{-W_0(-e^{-y}), -W_{-1}(-e^{-y})\}
\end{equation} 
Since $x-\log x$ is convex with minimum at $x=1$, and since the two solutions surround $x=1$, the set of points which satisfy $x - \log x \leq y$ is precisely the interval $[-W_0(-e^{-y}), -W_{-1}(-e^{-y})]$. Evaluating the bound at $y=1$ completes the proof. 
\end{proof}

Lemma \ref{lem:kl} is the last lemma needed to prove Theorem \ref{thm:cvae}, which explicitly bounds terms involving the mean and variance of the prior and posterior distributions by their KL distance, leveraging Lemma \ref{lem:lambertw} to bound the ratio of the variances. The two quantities bounded in this lemma will be used in the main theorem to bound the distance of a point with low reconstruction error from the prior distribution. 
\begin{lemma}
\label{lem:kl}
Suppose the KL distance between two normals is bounded, so $KL(\mathcal N(\mu_1, \sigma^2_1)\|\mathcal N(\mu_2, \sigma^2_2)) \leq K$ for some constant $K$. Then, 
$$(\mu_1 - \mu_2)^2\frac{1}{\sigma_2^2} \leq K$$
and also $\frac{\sigma_1^2}{\sigma_2^2} \in [a,b]$ where
$$a = -W_0(-e^{-(K + 1)}), \quad b= -W_{-1}(-e^{-(K +1)})$$
\end{lemma}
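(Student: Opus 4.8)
The plan is to reduce the hypothesis to a single scalar inequality and then split it into two non-negative pieces. Since both arguments of the KL divergence are univariate normals, I would begin by writing out the standard closed form
\begin{equation}
KL(\mathcal N(\mu_1,\sigma_1^2)\|\mathcal N(\mu_2,\sigma_2^2)) = \frac{1}{2}\left(\frac{\sigma_1^2}{\sigma_2^2} + \frac{(\mu_1-\mu_2)^2}{\sigma_2^2} - 1 - \log\frac{\sigma_1^2}{\sigma_2^2}\right).
\end{equation}
Introducing the two scalar quantities $t = \sigma_1^2/\sigma_2^2$ and $s = (\mu_1-\mu_2)^2/\sigma_2^2$, the assumption $KL \leq K$ becomes $(t - \log t - 1) + s \leq 2K$, up to the bookkeeping of the factor $\tfrac12$ discussed below.

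The key structural observation is that $t - \log t - 1 \geq 0$ for every $t>0$: the map $t\mapsto t-\log t$ is strictly convex with its unique minimum at $t=1$, where it equals $1$. Hence the bound on the KL is a sum of two \emph{separately non-negative} terms, and each one is therefore bounded by the whole. Bounding the mean term $s$ by the right-hand side immediately yields the first conclusion, $(\mu_1-\mu_2)^2/\sigma_2^2 = s \leq K$.

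For the second conclusion I would discard the non-negative $s$ to isolate the variance term, giving $t - \log t - 1 \leq K$, i.e. $t - \log t \leq K+1$. This is exactly the hypothesis of Lemma \ref{lem:lambertw} with $x = t$ and $y = K+1$, which then places $t = \sigma_1^2/\sigma_2^2$ in the interval $[-W_0(-e^{-(K+1)}), -W_{-1}(-e^{-(K+1)})] = [a,b]$, completing the argument.

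The only real obstacle is not analytic but one of bookkeeping: keeping track of the factor $\tfrac12$ in the KL definition and matching it to the convention under which the per-dimension constant $K$ is declared in Assumption \ref{ass:objective}, so that the isolated pieces come out as $s \leq K$ and $t - \log t \leq K+1$ rather than carrying a stray factor of two. Once the non-negativity of $t-\log t - 1$ is in hand — the mechanism that decouples the mean bound from the variance bound — and Lemma \ref{lem:lambertw} is invoked to handle the transcendental inversion into LambertW branches, no further computation is required.
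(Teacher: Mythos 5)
Your proposal is correct and follows essentially the same route as the paper's proof: expand the KL in closed form, use the non-negativity of $t - \log t - 1$ to isolate the mean term, drop the non-negative mean term to get $t - \log t \leq K+1$, and invoke Lemma \ref{lem:lambertw}. Your explicit attention to the factor of $\tfrac12$ is a point in your favor, since the paper's proof silently writes the KL expression without it (consistent with the convention in Assumption \ref{ass:objective}, where $K_i$ bounds twice the per-dimension KL).
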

\begin{proof}
By definition of KL divergence, we have
\begin{equation}
\frac{\sigma_1^2}{\sigma_2^2} + (\mu_1 - \mu_2)^2\frac{1}{\sigma_2^2} - 1 - \log \frac{\sigma_1^2}{\sigma_2^2} \leq K
\end{equation}
Since $x - \log x \geq 1$ for $x\geq 0$, we apply this for $x = \frac{\sigma_1^2}{\sigma_2^2}$ to prove the first bound on the squared distance as  
\begin{equation}
(\mu_1 - \mu_2)^2\frac{1}{\sigma_1^2} \leq K 
\end{equation}
Next, since $(\mu_1 - \mu_2)^2\frac{1}{\sigma_2^2} \geq 0$, we can bound the remainder as the following 
\begin{equation}
\frac{\sigma_1^2}{\sigma_2^2} - \log \frac{\sigma_1^2}{\sigma_2^2} \leq K + 1
\end{equation}
and using Lemma \ref{lem:lambertw}, we can bound $\frac{\sigma_1^2}{\sigma_2^2} \in [a, b]$ where 
$$a = -W_0(-e^{-(K + 1)}), \quad b= -W_{-1}(-e^{-(K +1)})$$
\end{proof}

With these three results, we can now prove the first main theorem, which we is presented below in its complete form, allowing us to formally tie the CVAE objective to the existence of a nearby point with low reconstruction error. 
\setcounter{theorem}{0}
\begin{theorem}
\label{thm:cvae}
Consider the likelihood lower bound for $x\in \mathbb R^n$ from the conditional VAE in $k$-dimensional latent space conditioned on some other input $y$. Let the posterior, prior, and decoder distributions be standard multivariate normals with diagonal covariance as follows 
$$q(z|x,y) \sim \mathcal N(\mu(x,y), \sigma^2(x,y)), \quad p(z|y) \sim \mathcal N(\mu(y), \sigma^2(y)), \quad p(x|z,y) \sim \mathcal N(g(z,y), I)$$
resulting in the following likelihood lower bound:
\begin{equation}
\log p(x|y) \geq \mathbb E_{q(z|x,y)}[\log p(x|z,y)] - KL(q(z|x,y) \| p(z|y))
\end{equation}
Suppose we have trained the lower bound to some thresholds $R,K_i$ 
$$\mathbb E_{q(z|x,y)}[\log p(x|z,y)] \geq R, \quad KL(q(z|x,y) \| p(z|y)) \leq \sum_{i=1}^k K_i$$ 
where $K_i$ bounds the KL-divergence of the $i$th dimension. Let $r$ be the Mahalanobis distance which captures $1-\alpha$ of the probability mass for a $k$-dimensional standard multivariate normal for some $0 < \alpha < 1$. Then, there exists a $z$ such that $\|\frac{z - \mu(y)}{\sigma(y)}\|_2 \leq \epsilon$ and $\|g(z,y) - x\|^2_2 \leq \delta$ for 
$$\epsilon = Br + \sqrt{\sum_i K_i}, \quad \delta = -\frac{1}{1-\alpha}\left(2R + m\log(2\pi)\right)$$ 
where $B$ is a constant dependent on $K_i$. Moreover, as $R\rightarrow -\frac{1}{2}m\log(2\pi)$ and $K_i \rightarrow 0$ (the theoretical limits of these bounds, e.g. by training), then $\epsilon \rightarrow r$ and $\delta \rightarrow 0$. 
\end{theorem}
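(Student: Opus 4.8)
The plan is to establish the two conclusions separately — first the reconstruction bound $\delta$, then the latent-distance bound $\epsilon$ — and finally verify the stated limits. Both bounds come from combining the three supporting lemmas (Lemma \ref{lem:normal}, Lemma \ref{lem:lambertw}, Lemma \ref{lem:kl}) with the explicit Gaussian parameterizations.

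First I would unpack the reconstruction term. Since $p(x|z,y) \sim \mathcal N(g(z,y), I)$, the log-likelihood is $\log p(x|z,y) = -\frac{1}{2}\|x - g(z,y)\|_2^2 - \frac{m}{2}\log(2\pi)$, so the assumption $\mathbb E_{q(z|x,y)}[\log p(x|z,y)] \geq R$ rearranges into $\mathbb E_q[\|x - g(z,y)\|_2^2] \leq -2R - m\log(2\pi)$. I would then reparameterize the posterior expectation via $z = \mu(x,y) + \sigma(x,y)\cdot w$ with $w \sim \mathcal N(0,I)$, and apply Lemma \ref{lem:normal} to the non-negative function $f(w) = \|x - g(\mu(x,y) + \sigma(x,y)\cdot w, y)\|_2^2$. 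This produces a point $w^\ast$ with $\|w^\ast\|_2 \leq r$ and $f(w^\ast) \leq \frac{-2R - m\log(2\pi)}{1-\alpha} = \delta$, which immediately gives the reconstruction bound for the candidate $z^\ast = \mu(x,y) + \sigma(x,y)\cdot w^\ast$.

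The more delicate part is showing that this same $z^\ast$ lies within the prior-standardized ball of radius $\epsilon$. I would write, coordinatewise, $\frac{z^\ast_i - \mu_i(y)}{\sigma_i(y)} = \frac{\mu_i(x,y) - \mu_i(y)}{\sigma_i(y)} + \frac{\sigma_i(x,y)}{\sigma_i(y)} w^\ast_i$ and split the overall norm by the triangle inequality into a mean-shift term and a variance-scaling term. For the mean-shift term, Lemma \ref{lem:kl} gives $\frac{(\mu_i(x,y)-\mu_i(y))^2}{\sigma_i^2(y)} \leq K_i$ per dimension, so summing yields $\sqrt{\sum_i K_i}$. For the variance-scaling term, Lemma \ref{lem:kl} bounds $\frac{\sigma_i^2(x,y)}{\sigma_i^2(y)} \leq b_i = -W_{-1}(-e^{-(K_i+1)})$; pulling out $B^2 = \max_i b_i$ and using $\|w^\ast\|_2 \leq r$ gives a bound of $Br$. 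Adding the two contributions produces exactly $\epsilon = Br + \sqrt{\sum_i K_i}$, with $B$ depending only on the $K_i$ as claimed.

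Finally, the limits follow from Lemma \ref{lem:lambertw}: as $K_i \to 0$, each $b_i \to -W_{-1}(-e^{-1}) = 1$, so $B \to 1$ and $\sqrt{\sum_i K_i} \to 0$, giving $\epsilon \to r$; and as $R \to -\frac{1}{2}m\log(2\pi)$, the numerator of $\delta$ vanishes, giving $\delta \to 0$. I expect the main obstacle to be the latent-distance step: the point of low reconstruction error produced by Lemma \ref{lem:normal} lives in the \emph{posterior's} standardized coordinates, whereas the perturbation set is defined through the \emph{prior's} standardized coordinates, so transferring $w^\ast$ across requires carefully coupling the posterior and prior means and variances through the per-dimension KL bound. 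Keeping the mean-shift and variance-scaling contributions cleanly separated — and ensuring the variance ratio is bounded uniformly so the constant $B$ can be factored out of the sum — is where the argument demands the most care.
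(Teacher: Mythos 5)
Your proposal is correct and follows essentially the same route as the paper's proof: reparameterize the posterior, apply Lemma \ref{lem:normal} to the squared reconstruction error to obtain a point $w^\ast$ with $\|w^\ast\|_2 \leq r$, then transfer it to prior-standardized coordinates via the triangle inequality, bounding the mean-shift term by $\sqrt{\sum_i K_i}$ and the variance-ratio term by $B r$ using Lemma \ref{lem:kl}, with the limits following from Lemma \ref{lem:lambertw}. The only difference is presentational — you unpack the Gaussian log-likelihood and the coordinatewise decomposition more explicitly than the paper does.
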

\begin{proof}
%The proof for the CVAE is largely similar to Theorem \ref{thm:vae}, as Theorem \ref{thm:vae} does not make any assumption about the encoding or decoding networks, and so many of the same steps can be taken while conditioning all distributions. 
The high level strategy for this proof will consist of two main steps. First, we will show that there exists a point near the encoding distribution which has low reconstruction error, where we leverage the Mahalanobis distance to capture nearby points with Lemma \ref{lem:normal}. Then, we apply the triangle inequality to bound its distance under the prior encoding. Finally, we will use Lemma \ref{lem:kl} to bound remaining quantities relating the distances between the prior and encoding distributions to complete the proof. 

%Expanding the assumed bound on the KL-divergence results in the following: 
%\begin{equation}
%\frac{\sigma_i^2(x,y)}{\sigma_i^2(y)} + (\mu_i(y) - \mu_i(x,y))^2\frac{1}{\sigma_i^2(y)} - 1 - \log \frac{\sigma_i^2(x,y)}{\sigma_i^2(y)} \leq K_i 
%\end{equation}
%We will now bound two important quantities, the distances between the means and the ratio of the variances. %analogous to the mean and variance bounds from Theorem \ref{thm:vae}. First, similar to before, 
%Since $x - \log x \geq 1$ for $x\geq 0$, we apply this for $x = \frac{\sigma_i^2(x,y)}{\sigma_i^2(y)}$ to get 

%Next, let $p(x|z,y) \sim \mathcal N(g(z,y),I)$ be the parameterization of the decoder. 
Let $z(u) = u\cdot \sigma(x,y) + \mu(x,y)$ be the parameterization trick of the encoding distribution. Rearranging the log probability in the assumption and using the reparameterization trick, we get 
\begin{equation}
\mathbb E_{u}\left[\|x-g(z(u),y)\|_2^2\right] \leq - 2R - m \log(2\pi) = \delta(1-\alpha)
\end{equation}
%for reparameterization $z(u) = u \cdot \sigma(x,y) + \mu(x,y)$ (this identical to the corresponding steps in Theorem \ref{thm:vae}, so we skip some details here). 
Applying Lemma \ref{lem:normal}, there must exist a $u$ such that $\|u\|\leq r$ and $\|x - g(z(u),y)\|_2^2 \leq \delta$, and so $z(u)$ satisfies the reconstruction criteria for $\delta$. We will now show that $z(u)$ fulfills the remaining criteria for $\epsilon$: calculating its $\ell_2$ norm under the prior distribution and applying the triangle inequality, we get
\begin{equation}
\left\|\frac{z(u) - \mu(y)}{\sigma(y)}\right\|_2 = \left\|\frac{ u \cdot \sigma(x,y) + \mu(x,y) - \mu(y)}{\sigma(y)}\right\|_2 \leq \left\|\frac{ u \cdot \sigma(x,y)}{\sigma(y)}\right\|_2 + \left\|\frac{\mu(x,y) - \mu(y)}{\sigma(y)}\right\|_2
\end{equation}

We can use Lemma \ref{lem:kl} on the KL assumption to bound the following quantities
\begin{equation}
(\mu_i(y) - \mu_i(x,y))^2\frac{1}{\sigma_i^2(y)} \leq K_i, \quad \frac{\sigma_i^2(x,y)}{\sigma_i^2(y)} \in [a_i, b_i]
\end{equation} 
where $[a_i,b_i]$ are as defined from Lemma \ref{lem:kl}. 

Let $B = \max_i \sqrt{b_i}$, so $\frac{\sigma_i(x,y)}{\sigma_i(y)} \leq B$ for all $i$. Plugging this in along with the previous bounds we get the following bound on the norm of $z(u)$ before the prior reparameterization: 
\begin{equation}
\left\|\frac{z(u) - \mu(y)}{\sigma(y)}\right\|_2 \leq Br + \sqrt{\sum_i K_i} = \epsilon
\end{equation}
Thus, the norm (before the prior reparameterization) and reconstruction error of $z(u)$ can be bounded by $\epsilon$ and $\delta$. 

To conclude the proof, we note that from Lemma \ref{lem:lambertw}, $K_i \rightarrow 0$ for all $i$ implies $B \rightarrow 1$, and so $\epsilon \rightarrow r$. Similarly by inspection, $R\rightarrow -\frac{1}{2}m\log(2\pi)$ implies that $\delta \rightarrow 0$, which concludes the proof. 
\end{proof}

\subsection{Proof of Theorem \ref{thm:cvaesuperset}}
In this section, we move on to prove the second main result of this paper, Theorem \ref{thm:cvaesuperset}, which connects the CVAE objective to the sufficient likelihood property. The proof for this theorem is not immediate, because since the generator is an arbitrary function, two normal distributions which have a difference in means have an exponentially growing ratio in their tail distributions, and so truncating the normal distributions is crucial. This truncation is leveraged in Lemma \ref{lem:densityratio}, which bounds the ratio of two normal distributions constrained within an $\ell_2$ ball, and is what allows us to connect the expectation over the prior with the expectation over the posterior. 
\begin{lemma}
\label{lem:densityratio}
Let $p \sim \mathcal N(0,1)$ and $q \sim \mathcal N(-\mu/\sigma^2, 1/\sigma^2)$. Then, 
\begin{equation}
\frac{q(z)}{p(z)}1(|\sigma z + \mu| \leq r) \leq h(r, \mu, \sigma). 
\end{equation}
Furthermore if $\mu^2 \leq K$ and $\sigma \in [a,b]$, then $h(r, \mu, \sigma) \leq be^{\max(C_1, C_2)}$ where 
$$C_1 = (b^2 -1)r^2 - K, \quad C_2 = \frac{1}{a^2}\left((1-a^2)r^2 + 2r\sqrt{K} + K)\right).$$
\end{lemma}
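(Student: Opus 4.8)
The plan is to reduce the lemma to maximizing a single univariate quadratic (the log density-ratio) over the interval cut out by the truncation indicator, and then to loosen the resulting extremum into the two closed-form constants. The key realization is that truncation is what makes everything finite, so the interval constraint must be used essentially.

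First I would write out both Gaussian densities and form the ratio explicitly. Since $p \sim \mathcal N(0,1)$ and $q \sim \mathcal N(-\mu/\sigma^2, 1/\sigma^2)$, a direct computation gives
\[
\frac{q(z)}{p(z)} = \sigma\exp\left(\tfrac12 z^2 - \tfrac12\bigl(\sigma z + \tfrac{\mu}{\sigma}\bigr)^2\right) = \sigma\exp\left(\tfrac{1-\sigma^2}{2}z^2 - \mu z - \tfrac{\mu^2}{2\sigma^2}\right),
\]
so the whole problem is controlled by the exponent $E(z) := \tfrac{1-\sigma^2}{2}z^2 - \mu z - \tfrac{\mu^2}{2\sigma^2}$, a quadratic in $z$ with leading coefficient $\tfrac{1-\sigma^2}{2}$. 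The indicator $1(|\sigma z + \mu| \le r)$ confines $z$ to the interval $[\,(-r-\mu)/\sigma,\ (r-\mu)/\sigma\,]$, on which $|\sigma z| \le r + |\mu|$. I would then set $h(r,\mu,\sigma)$ equal to $\sigma$ times the exponential of the maximum of $E$ over this interval, which immediately yields the first claimed inequality.

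The core step is to maximize $E$ over the interval, which I would handle by a case split on the sign of its leading coefficient, i.e. on whether $\sigma < 1$ or $\sigma \ge 1$. When $\sigma < 1$ the quadratic is convex, so its maximum sits at an endpoint where $\sigma z + \mu = \pm r$; substituting, using $\mu^2 \le K$ and $\sigma \in [a,b]$ (the lower endpoint $a$ is binding, since $\tfrac{1-\sigma^2}{2\sigma^2}$, $\tfrac{|1+\sigma-\sigma^2|}{\sigma^2}$, and $\tfrac{2-\sigma}{2\sigma}$ are all largest at $\sigma=a$), and loosening a factor of $\tfrac12$ gives exactly $C_2 = \tfrac{1}{a^2}\bigl((1-a^2)r^2 + 2r\sqrt K + K\bigr)$. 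When $\sigma \ge 1$ the quadratic is concave, and the maximum is attained at the interior vertex (when it lies inside the interval) or else again at an endpoint; here the upper endpoint $b$ is binding and crude estimates yield $C_1 = (b^2-1)r^2 - K$. Taking $\max(C_1,C_2)$ then covers every $\sigma \in [a,b]$ simultaneously, and pulling the prefactor out with $\sigma \le b$ gives $h \le b\,e^{\max(C_1,C_2)}$.

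The hard part will be the concave case together with the bookkeeping that produces clean constants. Without the indicator the ratio is unbounded whenever the two Gaussians differ in mean or variance, since a difference of means gives an exponentially growing tail ratio, so the analysis must genuinely exploit the interval to control the concave-case vertex value; that value behaves like $\tfrac{\mu^2}{\sigma^2-1}$ and would blow up as $\sigma \to 1^+$ were it not that the vertex simultaneously exits the feasible interval, forcing the maximum back to an endpoint. Carefully determining when the vertex is interior, and uniformly loosening the $\sigma$- and $\mu$-dependent coefficients over $[a,b]$ and $|\mu|\le\sqrt K$ down to the single pair $(C_1,C_2)$, is where the real effort lies; everything else is routine algebra.
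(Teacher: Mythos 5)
Your proposal is correct in approach and is essentially the paper's own proof: expand the density ratio, reduce to maximizing the quadratic exponent over the truncation interval, case-split on the sign of the leading coefficient (concave case handled at the interval-constrained vertex, which exits the interval precisely when its value would blow up near $\sigma \to 1^+$; convex case handled at the endpoints $\sigma z + \mu = \pm r$), then loosen with $\mu^2 \leq K$ and $\sigma \in [a,b]$ to obtain $b\,e^{\max(C_1,C_2)}$. The only substantive difference is that you read the stated mean $-\mu/\sigma^2$ literally, while the paper's proof (and its application in Corollary \ref{cor:densityratio}) works with mean $-\mu/\sigma$ --- evidently a typo in the statement --- under which the endpoint evaluation collapses cleanly to $\frac{1}{\sigma^2}\left((1-\sigma^2)r^2 \pm 2r\mu + \mu^2\right)$ and yields the stated constants directly, whereas your parameterization produces the messier coefficients $\frac{1-\sigma^2}{2\sigma^2}$, $\frac{1+\sigma-\sigma^2}{\sigma^2}$, $\frac{2-\sigma}{2\sigma}$ and therefore needs exactly the additional loosenings (each maximized at $\sigma = a$ and dominated by the corresponding term of $C_2$) that you describe.
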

\begin{proof}
The proof here is almost purely algebraic in nature. By definition of $q$ and $p$, we have 
\begin{equation}
\frac{q(z)}{p(z)} = \sigma e^{-\frac{1}{2}((\sigma z + \mu)^2  - z^2)} =  \sigma e^{\frac{1}{2}((1-\sigma ^2)z^2  - 2\sigma \mu z - \mu^2)}
\end{equation}
We will focus on bounding the exponent, $(1-\sigma ^2)z^2  - 2\sigma \mu z - \mu^2$. 
We can bound this by considering two cases. First, suppose that $1-\sigma^2 < 0$ and so the exponent is a concave quadratic. Then, the maximum value of the quadratic is at its root: 
\begin{equation}
2(1-\sigma^2)z - 2\sigma \mu= 0 \Rightarrow z = \frac{\sigma \mu}{1-\sigma^2}
\end{equation}
Further assume that this $z$ is within the interval of the indicator function, so $\left|\frac{\sigma^2\mu}{1-\sigma^2} + \mu\right| = \left|\frac{\mu}{1-\sigma^2} \right| \leq r$. Then, plugging in the maximum value into the quadratic results in the following bound $f_1$ for this case: 
\begin{equation}
(1-\sigma ^2)z^2  - 2\sigma \mu z - \mu^2 \leq -(1-\sigma^2)\frac{\sigma^2\mu^2}{(1-\sigma^2)^2} - \mu^2 \leq (\sigma^2-1)r^2 - \mu^2 \equiv f_1(r, \mu, \sigma)
\end{equation}
Consider the other case, so either $1-\sigma^2 \geq 0$, or the optimal value for $z^*$ in the previous case when $1-\sigma^2 < 0$ is not within the interval $|\sigma z + \mu| \leq r$. Then, the maximum value of this quadratic must occur at $|\sigma z + \mu| = r$, so $z = \frac{\pm r - \mu}{\sigma}$. Plugging this into the quadratic for positive $r$, this results in 
\begin{equation}
\left.(1-\sigma ^2)z^2  - 2\sigma \mu z - \mu^2\right|_{\sigma z + \mu = r} = \frac{1}{\sigma^2}\left((1-\sigma ^2)r^2 -  2r\mu  + \mu^2  \right)
\end{equation}
and for negative $r$, this is 
\begin{equation}
\left.(1-\sigma ^2)z^2  - 2\sigma \mu z - \mu^2\right|_{\sigma z + \mu = -r}  = \frac{1}{\sigma^2}\left((1-\sigma ^2)r^2  + 2r\mu  + \mu^2\right)
\end{equation}
and so we can bound this case with the following function $f_2$: 
\begin{equation}
(1-\sigma ^2)z^2  - 2\sigma \mu z - \mu^2 \leq \frac{1}{\sigma^2}\left((1-\sigma ^2)r^2  + 2|r\mu|  + \mu^2\right) \equiv f_2(r, \mu, \sigma)
\end{equation}
Plugging in the maximum over both cases forms our final bound on the ratio of distributions. 
\begin{equation}
\frac{q(z)}{p(z)}1(|\sigma z + \mu| \leq r) \leq \sigma e^{\max(f_1(r, \mu, \sigma), f_2(r, \mu, \sigma))} = h(r, \mu, \sigma)
\end{equation}

To finish the proof, assume we have the corresponding bounds on $\mu$ and $\sigma$. Then, the first case can be bounded with $C_1$ defined as 
$$f_1(r,\mu,\sigma) \leq (b^2 -1)r^2 - K = C_1$$
The second case can be bounded with $C_2$ defined as 
$$f_2(r,\mu,\sigma) \leq \frac{1}{a^2}\left((1-a^2)r^2 + 2r\sqrt{K} + K)\right) = C_2$$
And thus we can bound $h(r,\mu,\sigma)$ as 
$$h(r,\mu,\sigma) \leq be^{\max(C_1, C_2)} = H$$
\end{proof}

Lemma \ref{lem:densityratio} can be directly applied to the setting of Theorem \ref{thm:cvaesuperset} for the case of a non-conditional VAE with a standard normal distribution for the prior. However, since we are using conditional VAEs instead, the prior distribution has its own mean and variance and so the posterior distribution needs to be unparameterized from the posterior and reparameterized with the prior. Corollary \ref{cor:densityratio} establishes this formally, extending Lemma \ref{lem:densityratio} to the conditional setting. 
\begin{corollary}
\label{cor:densityratio}
Let $p \sim \mathcal N(0,1)$ and $q \sim \mathcal N\left(\frac{\mu_2-\mu_1}{\sigma_1}, \frac{\sigma_2^2}{\sigma_1^2}\right)$. Then, 
\begin{equation}
\frac{q(z)}{p(z)}1\left(\left|\frac{\sigma_1 z + \mu_1 - \mu_2}{\sigma_2}\right| \leq r\right) \leq h(r, \mu, \sigma)
\end{equation}
where $\mu = \frac{1}{\sigma_2}(\mu_1 - \mu_2)$ and $\sigma = \frac{\sigma_1}{\sigma_2}$. Furthermore, if $KL(\mathcal N(\mu_1, \sigma_1^2)\|\mathcal N(\mu_2, \sigma_2^2)) \leq K$ for some constant $K$, then $h(r,\mu,\sigma) \leq H$ where $H$ is a constant which depends on $K$. 
%where $h(r,\mu, \sigma) \rightarrow 1$ as $\mu \rightarrow 0$ and $\sigma \rightarrow 1$. 
\end{corollary}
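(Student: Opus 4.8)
The plan is to treat this corollary as a pure change-of-variables reduction to Lemma \ref{lem:densityratio}: the conditional ratio is shown to be an \emph{instance} of the non-conditional ratio already bounded there, under the substitution $\mu = \frac{1}{\sigma_2}(\mu_1 - \mu_2)$ and $\sigma = \frac{\sigma_1}{\sigma_2}$. First I would substitute these two definitions into the density ratio $\sigma e^{-\frac12((\sigma z + \mu)^2 - z^2)}$ and the indicator $1(|\sigma z + \mu|\le r)$ appearing in Lemma \ref{lem:densityratio}, and check three elementary algebraic identities. The mean of the normal $q$ there becomes $-\mu/\sigma = \frac{\mu_2-\mu_1}{\sigma_1}$, its variance becomes $1/\sigma^2 = \frac{\sigma_2^2}{\sigma_1^2}$, and the indicator argument becomes $\sigma z + \mu = \frac{\sigma_1 z + \mu_1 - \mu_2}{\sigma_2}$. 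All three match the objects in the corollary exactly, so the bound $\frac{q(z)}{p(z)}1(\cdot)\le h(r,\mu,\sigma)$ follows immediately from Lemma \ref{lem:densityratio} with no further work.

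For the quantitative ``furthermore'' clause, the plan is to route the KL hypothesis through Lemma \ref{lem:kl}. Applying that lemma to $KL(\mathcal N(\mu_1,\sigma_1^2)\|\mathcal N(\mu_2,\sigma_2^2))\le K$ produces precisely the two ingredients that the second half of Lemma \ref{lem:densityratio} demands: the mean-separation bound $(\mu_1-\mu_2)^2/\sigma_2^2 \le K$, which is exactly $\mu^2 \le K$, and the variance-ratio containment $\sigma_1^2/\sigma_2^2 \in [a,b]$, which is exactly $\sigma^2 \in [a,b]$. Substituting these into the explicit estimate $h \le b\,e^{\max(C_1,C_2)}$ from Lemma \ref{lem:densityratio} then yields a constant $H$ depending only on $K$ (through $a$, $b$, and $K$ itself) and the fixed radius $r$.

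The one point that needs care — and essentially the only obstacle — is the bookkeeping between standard-deviation bounds and variance bounds. Lemma \ref{lem:densityratio} states its quantitative hypothesis as $\sigma \in [a,b]$ and uses $a^2,b^2$ inside $C_1,C_2$, whereas Lemma \ref{lem:kl} returns an interval for the \emph{variance} ratio $\sigma^2 = \sigma_1^2/\sigma_2^2$. I would therefore pass $\sqrt a$ and $\sqrt b$ as the standard-deviation bounds when invoking Lemma \ref{lem:densityratio}, so that the squares in $C_1,C_2$ recover the original variance-ratio endpoints from Lemma \ref{lem:kl}. Beyond this convention matching there is no genuine analytic difficulty, since both supporting lemmas carry the actual content; the corollary simply records that the conditional prior-to-posterior ratio, expressed in the posterior's reparameterized coordinates, is covered verbatim by the non-conditional result.
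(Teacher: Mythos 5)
Your proposal is correct and follows essentially the same route as the paper's proof: the paper likewise observes that $q \sim \mathcal N\left(-\frac{\mu}{\sigma}, \frac{1}{\sigma^2}\right)$ and $\frac{\sigma_1 z + \mu_1 - \mu_2}{\sigma_2} = \sigma z + \mu$, reducing the first claim to Lemma \ref{lem:densityratio}, and invokes Lemma \ref{lem:kl} for the KL clause. Your extra care in passing $\sqrt{a}$ and $\sqrt{b}$ as the standard-deviation bounds addresses a point the paper glosses over (its proof states $\sigma \in [a,b]$ even though Lemma \ref{lem:kl} bounds the variance ratio $\sigma^2$), so your bookkeeping is if anything more precise than the paper's.
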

\begin{proof}
First, note that Lemma \ref{lem:kl} implies that $\mu^2 \leq K$ and $\sigma \in [a,b]$. Observe that $q\sim \mathcal N\left(-\frac{\mu}{\sigma}, \frac{1}{\sigma^2}\right)$ and $\frac{\sigma_1 z + \mu_1 - \mu_2}{\sigma_2} = \sigma z + \mu$, and so the proof reduces to an application of Lemma \ref{lem:densityratio} to this particular $\mu$ and $\sigma$. 
\end{proof}

%
%\begin{lemma}
%\label{lem:boundh}
%Suppose $KL(\mathcal N(\mu_0, \sigma_0^2)|\mathcal N(\mu_1, \sigma_1^2)) \leq K$ for some constant $K$. Then, $h(r,\mu,\sigma) \leq TODO$ where $h$ is as defined from Corollary \ref{cor:densityratio}. 
%\end{lemma}
%\begin{proof}
%From Lemma \ref{lem:kl} we know that 
%$$(\mu_1 - \mu_2)^2\frac{1}{\sigma_2^2} \leq K, \quad \frac{\sigma_1^2}{\sigma_2^2} \in [a,b]$$
%\end{proof}
These results allow us to bound the ratio of the prior and posterior distributions within a fixed radius, which will allow us to bound the expectation over the prior with the expectation over the posterior. We can now finally prove Theorem 2, presented in its complete form below. 
\begin{theorem}
\label{thm:cvaesuperset}
Consider the likelihood lower bound for $x\in \mathbb R^n$ from the conditional VAE in $k$-dimensional latent space conditioned on some other input $y$. Let the posterior, prior, and decoder distributions be standard multivariate normals with diagonal covariance as follows 
$$q(z|x,y) \sim \mathcal N(\mu(x,y), \sigma^2(x,y)), \quad p(z|y) \sim \mathcal N(\mu(y), \sigma^2(y)), \quad p(x|z,y) \sim \mathcal N(g(z,y), I)$$
resulting in the following likelihood lower bound:
\begin{equation}
\log p(x|y) \geq \mathbb E_{q(z|x,y)}[\log p(x|z,y)] - KL(q(z|x,y) \| p(z|y))
\end{equation}
Suppose we have trained the lower bound to some thresholds $R,K_i$ 
$$\mathbb E_{q(z|x,y)}[\log p(x|z,y)] \geq R, \quad KL(q(z|x,y) \| p(z|y)) \leq \sum_{i=1}^k K_i$$ 
where $K_i$ bounds the KL-divergence of the $i$th dimension. Let $r$ be the Mahalanobis distance which captures $1-\alpha$ of the probability mass for a $k$-dimensional standard multivariate normal for some $0 < \alpha < 1$. Then, the \emph{truncated expected reconstruction error} can be bounded with 
$$\mathbb E_{p_r(u)}\left[\|g(u\cdot\sigma(y) + \mu(y),y) - x\|_2^2\right] \leq - \frac{1}{1-\alpha}(2R + m \log(2\pi))H$$
where $p_r(u)$ is a multivariate normal that has been truncated to radius $r$ and $H$ is a constant that depends exponentially on $K_i$.
%Then, for all $z$ such that $\|z\|_2 \leq \epsilon$, we have $\|g(z) - x\|^2_2 \leq \delta$ for 
%$$\epsilon = TODO, \quad \delta = TODO$$ 
%where TODO. 

%Moreover, TODO as $R\rightarrow -\frac{1}{2}m\log(2\pi)$ and $K_i \rightarrow 0$ (the theoretical limits of these bounds achieved by zero reconstruction error and KL divergence), then $\epsilon \rightarrow r$ and $\delta \rightarrow 0$. 
\end{theorem}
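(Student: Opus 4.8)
The plan is to establish the bound by a change of measure that converts the expectation over the truncated prior into an expectation over the full posterior, paying a multiplicative price equal to the density ratio between the two distributions inside the truncation ball. First I would rewrite the reconstruction assumption into a usable form: since $p(x|z,y) = \mathcal N(g(z,y), I)$, we have $\log p(x|z,y) = -\tfrac12\|x - g(z,y)\|_2^2 - \tfrac{m}{2}\log(2\pi)$, so the assumption $\mathbb E_{q(z|x,y)}[\log p(x|z,y)] \geq R$ rearranges to $\mathbb E_{q(z|x,y)}[\|x - g(z,y)\|_2^2] \leq -2R - m\log(2\pi)$. This is the one quantity in the target bound controlled directly by the objective, but it is an expectation over the \emph{posterior}, whereas the theorem asks for an expectation over the truncated \emph{prior}.

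Next I would unfold the truncated-prior expectation. Writing $z(u) = u\cdot\sigma(y) + \mu(y)$ and letting $\phi$ denote the standard normal density, the truncated prior has density $\phi(u)\mathbb 1(\|u\|_2\leq r)/(1-\alpha)$, where the normalizing mass in the $r$-ball is exactly $1-\alpha$ by definition of the Mahalanobis radius. Factoring out this normalization and applying the Gaussian change of variables from $u$ to $z$ (so that $\phi(u)\,du = p(z|y)\,dz$) yields
$$\mathbb E_{p_r(u)}\!\left[\|g(z(u),y) - x\|_2^2\right] = \frac{1}{1-\alpha}\int_{\mathcal B} \|g(z,y)-x\|_2^2\, p(z|y)\,dz,$$
where $\mathcal B = \{z : \|(z-\mu(y))/\sigma(y)\|_2 \leq r\}$ is the prior's $r$-ball. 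This is where the factor $1/(1-\alpha)$ in the statement originates.

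The heart of the argument is then a change of measure from the prior to the posterior inside $\mathcal B$. Multiplying and dividing the integrand by $q(z|x,y)$ and bounding the density ratio $p(z|y)/q(z|x,y)$ on $\mathcal B$ is the key step. Because both distributions have diagonal covariance, the ratio factorizes over coordinates, and on the joint $\ell_2$ ball each coordinate satisfies $|(z_i-\mu_i(y))/\sigma_i(y)| \leq r$; I would therefore apply Corollary \ref{cor:densityratio} per dimension (with the posterior playing the role of the first, standardizing distribution and the prior the second, so that the KL direction $KL(q\|p)$ matches the objective) to obtain a per-coordinate bound $H_i$, and multiply to get $p(z|y)/q(z|x,y) \leq \prod_i H_i =: H$, a constant depending exponentially on the $K_i$ and $r$. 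Using non-negativity of the integrand to extend the integral from $\mathcal B$ back to all of $\mathbb R^k$ then gives
$$\frac{1}{1-\alpha}\int_{\mathcal B}\|g(z,y)-x\|_2^2\,p(z|y)\,dz \leq \frac{H}{1-\alpha}\,\mathbb E_{q(z|x,y)}\!\left[\|g(z,y)-x\|_2^2\right],$$
and combining with the rearranged reconstruction bound from the first step yields the claimed inequality.

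I expect the main obstacle to be controlling the density ratio $p(z|y)/q(z|x,y)$, precisely the point flagged before Lemma \ref{lem:densityratio}: since $g$ is arbitrary, the reconstruction error is uncontrolled in the tails, and the ratio of two normals with differing means grows exponentially there, so an untruncated change of measure would fail outright. The truncation to radius $r$ is exactly what makes the ratio bounded, and the quantitative per-coordinate estimate supplied by Lemma \ref{lem:densityratio} and Corollary \ref{cor:densityratio} is what renders the argument effective. The remaining care is in checking that $\ell_2$-ball containment implies the per-coordinate indicator needed to invoke the corollary in each dimension, and that the resulting product constant $H$ is assembled correctly.
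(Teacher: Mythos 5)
Your proposal is correct and follows essentially the same route as the paper's proof: unfolding the truncated prior with the $1/(1-\alpha)$ normalization, bounding the prior-to-posterior density ratio on the truncation region by relaxing the $\ell_2$ ball to per-coordinate constraints and applying Lemma \ref{lem:densityratio}/Corollary \ref{cor:densityratio} in each dimension (with the posterior as the standardizing distribution so the KL direction matches), and finally combining with the rearranged reconstruction bound. The only difference is presentational — you multiply and divide by $q(z|x,y)$ directly in $z$-coordinates, whereas the paper carries out the equivalent change of measure through explicit reparameterization and reverse-parameterization steps; since density ratios are invariant under a common change of variables, these are the same computation.
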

\begin{proof}
The overall strategy for this proof will be as follows. First, we will rewrite the expectation over a truncated normal into an expectation over a standard normal, using an indicator function to control the radius. Second, we will do an ``unparameterization'' of the standard normal to match the parameterized objective in the assumption. Finally, we will bound the ratio of the unparameterized density over the normal prior, which allows us to bound the expectation over the prior with the assumption. This last step to bound the ratio of densities is made possible by the truncation, which would otherwise grow exponentially with the tails of the distribution. 

For notational simplicity, let $f(\cdot) = \|g(\cdot,y) - x\|_2^2$. The quantity we wish to bound can be rewritten using the Mahalanobis distance as  
\begin{equation}
\mathbb E_{p_\alpha} \left[ f(u\cdot\sigma(y) + \mu(y)) \right]= \frac{1}{1-\alpha}\int_{u} f(u\cdot\sigma(y) + \mu(y))1(\|u\| \leq r)p(u) dz
\end{equation}
%\begin{equation}
%\mathbb E_{u}\left[\|x-g(u,y)\|_2^2\right] = \int_{u} \|x-g(u,y)\|_2^2 p(u) du
%\end{equation}
where we used the fact that $p_r(u) = \frac{1}{1-\alpha}1(\|u\|\leq r)p(u)$, which simply rewrites the density of a truncated normal using a scaled standard normal density and an indicator function. We can do a parameterization trick $z = u \cdot \sigma(y) + \mu(y)$ to rewrite this as 
\begin{equation}
\mathbb E_{p_r} \left[ f(u\cdot\sigma(y) + \mu(y)) \right]= \frac{1}{1-\alpha}\int_{z} f(z)1\left(\left\|\frac{z - \mu(y)}{\sigma(y)}\right\| \leq r\right)p_{z|y}(z) dz
\end{equation}
followed by a reverse parameterization trick with $v = \frac{z - \mu(x,y)}{\sigma(x,y)}$ to get the following equivalent expression
\begin{equation}
\frac{1}{1-\alpha}\int_{z} f(v\cdot \sigma(x,y) + \mu(x,y))1\left(\left\|\frac{v\cdot \sigma(x,y) + \mu(x,y) - \mu(y)}{\sigma(y)}\right\| \leq r\right)p_{v}(v) dz
\end{equation}
%\begin{equation}
%\label{eq:reparam}
%\mathbb E_{p_\alpha} \left[ f(u) \right]= \int_{v} f(v \cdot \sigma(x) + \mu(x)) 1(\|v \cdot \sigma(x) + \mu(x)\| \leq r)p_v(v) dv
%\end{equation}
%where $p_v \sim \mathcal N(-\mu/\sigma, 1/\sigma^2)$. 
where $p_v \sim \mathcal N\left(\frac{\mu(x)-\mu(x,y)}{\sigma(x,y)}, \frac{\sigma^2(x)}{\sigma^2(x,y)}\right)$. For convenience, we can let 
$$\hat{p}_v(v) = 1\left(\left\|\frac{v\cdot \sigma(x,y) + \mu(x,y) - \mu(y)}{\sigma(y)}\right\| \leq r\right)p_{v}(v)$$
which can be interpreted as a truncated version of $p_v$, and so the expectation can be represented more succinctly as 
\begin{equation}
\label{eq:expectationbeforebound}
\mathbb E_{p_r} \left[ f(u\cdot\sigma(y) + \mu(y)) \right]= \frac{1}{1-\alpha}\int_{z} f(v\cdot \sigma(x,y) + \mu(x,y))\hat p_{v}(v) dz.
\end{equation}
%Let $h(v) = 1(\|v \cdot \sigma(x) + \mu(x)\| \leq r)\frac{p_v(v)}{p(v)}$, then the reverse reparameterization can be more succinctly written as 
%\begin{equation}
%\mathbb E_{p_\alpha} \left[ f(u) \right]= \int_{v} f(v \cdot \sigma(x) + \mu(x))p(v)h(v)dv
%\end{equation}
We will now bound $\hat p_v(v)$ with the standard normal distribution $p(v)$ so that we can apply our assumption. Because $f$ is non-negative, the $\ell_2$ constraint in $\hat p_v(v)$ can be relaxed to an element-wise $\ell_\infty$ constraint to get 
%\begin{equation}
% 1(\|v \cdot \sigma(x) + \mu(x)\| \leq r)p_v(v) \leq p(v)\prod_{i=1}^k 1(|v_i \cdot \sigma_i(x) + \mu_i(x)| \leq r)\frac{p_v(v_i)}{p(v_i)}
%\end{equation}
\begin{equation}
\hat p_v(v)\leq p(v)\prod_{i=1}^k 1\left(\left|\frac{v\cdot \sigma(x,y) + \mu(x,y) - \mu(y)}{\sigma(y)}\right| \leq r\right)\frac{p_v(v_i)}{p(v_i)}
\end{equation}
where we also used the fact that $p$ is a diagonal normal, so $\prod_{i=1}^k p(v_i) = p(v)$. Each term in this product can be bounded by Lemma \ref{lem:densityratio} to get 
\begin{equation}
\label{eq:boundpdf}
\hat p_v(v) \leq p(v)\prod_{i=1}^k H_i = p(v)\cdot H
\end{equation}
where $H_i$ is as defined in Corollary \ref{cor:densityratio} for each $i$ using the corresponding $K_i$, and we let $H = \prod_{i=1}^k H_i$. 
%$$\prod_{i=1}^k h_i(r, \mu_i(x,y), \sigma_i(x,y)) = H.$$ 
Thus we can now bound the truncated expected value by plugging in our bound for $\hat p_v(v)$ into Equation \eqref{eq:expectationbeforebound} to get
\begin{equation}
\label{eq:truncated_expectation_bound}
%\mathbb E_{p_\alpha} \left[ f(u\cdot\sigma(y) + \mu(y)) \right]\leq  h(r,\mu(x,y),\sigma(x,y))\int_{v} f(v \cdot \sigma(x,y) + \mu(x,y)) p(v)dv
\mathbb E_{p_r} \left[ f(u\cdot\sigma(y) + \mu(y)) \right]\leq  \frac{H}{1-\alpha}\cdot \mathbb E_{q(z|x,y)} \left[ f(z)\right]
\end{equation}
Using our lower bound on the expected log likelihood from the assumption, we can bound the remaining integral with 
\begin{equation}
\label{eq:boundrecon}
\mathbb E_{q(z|x,y)}\left[f(z)\right] \leq -2R - m\log(2\pi)
\end{equation}
and so combining this into our previous bound from Equation \eqref{eq:truncated_expectation_bound} results in the final bound
\begin{equation}
E_{p_r} \left[ f(u) \right] \leq  - \frac{1}{1-\alpha}(2R + m \log(2\pi))H
\end{equation}
\end{proof}

\begin{figure}[t]
  \centering
  % Graphic for TeX using PGF
% Title: F:\Dropbox\perturbation_learning.neurips\figures\delta.dia
% Creator: Dia v0.97.2
% CreationDate: Mon May 25 16:10:12 2020
% For: Eric
% \usepackage{tikz}
% The following commands are not supported in PSTricks at present
% We define them conditionally, so when they are implemented,
% this pgf file will use them.
\ifx\du\undefined
  \newlength{\du}
\fi
\setlength{\du}{8\unitlength}
\begin{tikzpicture}
\pgftransformxscale{1.000000}
\pgftransformyscale{-1.000000}
\definecolor{dialinecolor}{rgb}{0.000000, 0.000000, 0.000000}
\pgfsetstrokecolor{dialinecolor}
\definecolor{dialinecolor}{rgb}{1.000000, 1.000000, 1.000000}
\pgfsetfillcolor{dialinecolor}
\pgfsetlinewidth{0.080000\du}
\pgfsetdash{}{0pt}
\pgfsetdash{}{0pt}
\pgfsetbuttcap
{
\definecolor{dialinecolor}{rgb}{0.000000, 0.000000, 0.000000}
\pgfsetfillcolor{dialinecolor}
% was here!!!
\definecolor{dialinecolor}{rgb}{0.000000, 0.000000, 0.000000}
\pgfsetstrokecolor{dialinecolor}
\draw (35.500000\du,8.000000\du)--(34.500000\du,8.000000\du);
}
\pgfsetlinewidth{0.080000\du}
\pgfsetdash{}{0pt}
\pgfsetdash{}{0pt}
\pgfsetbuttcap
{
\definecolor{dialinecolor}{rgb}{0.000000, 0.000000, 0.000000}
\pgfsetfillcolor{dialinecolor}
% was here!!!
\definecolor{dialinecolor}{rgb}{0.000000, 0.000000, 0.000000}
\pgfsetstrokecolor{dialinecolor}
\draw (38.000000\du,19.500000\du)--(38.000000\du,20.500000\du);
}
\pgfsetlinewidth{0.080000\du}
\pgfsetdash{}{0pt}
\pgfsetdash{}{0pt}
\pgfsetbuttcap
{
\definecolor{dialinecolor}{rgb}{0.000000, 0.000000, 0.000000}
\pgfsetfillcolor{dialinecolor}
% was here!!!
\definecolor{dialinecolor}{rgb}{0.000000, 0.000000, 0.000000}
\pgfsetstrokecolor{dialinecolor}
\draw (32.000000\du,19.500000\du)--(32.000000\du,20.500000\du);
}
\pgfsetlinewidth{0.100000\du}
\pgfsetdash{}{0pt}
\pgfsetdash{}{0pt}
\pgfsetbuttcap
{
\definecolor{dialinecolor}{rgb}{0.000000, 0.000000, 0.000000}
\pgfsetfillcolor{dialinecolor}
% was here!!!
\pgfsetarrowsstart{stealth}
\pgfsetarrowsend{stealth}
\definecolor{dialinecolor}{rgb}{0.000000, 0.000000, 0.000000}
\pgfsetstrokecolor{dialinecolor}
\draw (20.000000\du,20.000000\du)--(50.000000\du,20.000000\du);
}
\pgfsetlinewidth{0.100000\du}
\pgfsetdash{}{0pt}
\pgfsetdash{}{0pt}
\pgfsetbuttcap
{
\definecolor{dialinecolor}{rgb}{0.000000, 0.000000, 0.000000}
\pgfsetfillcolor{dialinecolor}
% was here!!!
\pgfsetarrowsstart{stealth}
\pgfsetarrowsend{stealth}
\definecolor{dialinecolor}{rgb}{0.000000, 0.000000, 0.000000}
\pgfsetstrokecolor{dialinecolor}
\draw (35.000000\du,22.000000\du)--(35.000000\du,5.000000\du);
}
\pgfsetlinewidth{0.200000\du}
\pgfsetdash{}{0pt}
\pgfsetdash{}{0pt}
\pgfsetmiterjoin
\pgfsetbuttcap
{
\definecolor{dialinecolor}{rgb}{0.000000, 0.388235, 0.505882}
\pgfsetfillcolor{dialinecolor}
% was here!!!
\pgfsetarrowsstart{stealth}
\pgfsetarrowsend{stealth}
{\pgfsetcornersarced{\pgfpoint{0.000000\du}{0.000000\du}}\definecolor{dialinecolor}{rgb}{0.000000, 0.388235, 0.505882}
\pgfsetstrokecolor{dialinecolor}
\draw (22.000000\du,20.000000\du)--(32.000000\du,20.000000\du)--(35.000000\du,8.000000\du)--(38.000000\du,20.000000\du)--(48.000000\du,20.000000\du);
}}
% setfont left to latex
\definecolor{dialinecolor}{rgb}{0.000000, 0.000000, 0.000000}
\pgfsetstrokecolor{dialinecolor}
\node[anchor=west] at (35.50000\du,8.044094\du){$\alpha$};
% setfont left to latex
\definecolor{dialinecolor}{rgb}{0.000000, 0.000000, 0.000000}
\pgfsetstrokecolor{dialinecolor}
\node at (32.200000\du,21.159313\du){$-\alpha^{-2}$};
% setfont left to latex
\definecolor{dialinecolor}{rgb}{0.000000, 0.000000, 0.000000}
\pgfsetstrokecolor{dialinecolor}
\node at (38.600000\du,21.159313\du){$\alpha^{-2}$};
\node at (50.800000\du,19.959313\du){$z$};
\node at (35.200000\du,3.959313\du){$\delta_\alpha(z)$};
\end{tikzpicture}
  \caption{A simple example demonstrating how the expected value of a function can tend to zero while the maximum tends to infinity as $a\rightarrow\infty$.}
  \label{fig:lineardelta}
\end{figure}
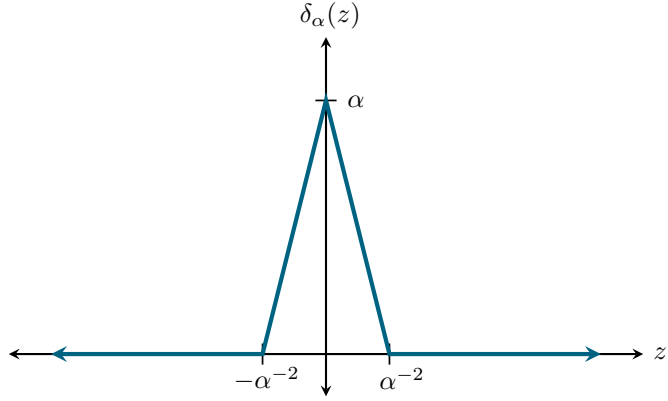

\subsection{Discussion of theoretical results}
\label{app:proof_discussion}
We first note that the bound on the expected reconstruction error in Theorem \ref{thm:cvaesuperset_short} is larger than the corresponding bound from Theorem \ref{thm:cvae_short} by an exponential factor. This gap is to some extent desired, since Theorem \ref{thm:cvae_short} characterizes the existence of a highly accurate approximation whereas Theorem \ref{thm:cvaesuperset_short} characterizes the average case, and so if this gap were too small, the average case of allowable perturbations would be constrained in reconstruction error, which is not necessarily desired in all settings. 
%for constraining transformations like the rotation/translation/skew perturbations for MNIST. 

This relates to the overapproximation error, or the maximum reconstruction error within the perturbation set. Initially one may think that having low overapproximation error to be a desirable property of perturbation sets, in order to constrain the perturbation set from deviating by ``too much.'' However, perturbation sets such as the rotation-translation-skew perturbations for MNIST will always have high overapproximation error, and so reducing this is not necessary desired. Furthermore, it not theoretically guaranteed for the CVAE to minimize the overapproximation error without further assumptions. This is because it is possible for a function to be arbitrarily small in expectation but be arbitrarily high at some nearby point, so optimizing the CVAE objective does not imply low overapproximation error. A simple concrete example demonstrating this is the following piecewise linear function
$$\delta_a(z) = \begin{cases}
a^3z + a &\text{for } -\frac{1}{a^2} \leq z  < 0\\
-a^3z + a &\text{for } 0 \leq z < \frac{1}{a^2}\\
0 &\text{otherwise} 
\end{cases}$$
for $a>0$ as seen in Figure \ref{fig:lineardelta}. Note that $\max_{|z|\leq \epsilon} \delta_a(z) = a$ which is unbounded as $a\rightarrow \infty$ for all $\epsilon\geq 0$, while at the same time $\mathbb E_{\mathcal N(0,1)}[\delta_a(z)]\leq \frac{1}{a} \rightarrow 0$. Thus, explicitly minimizing overapproximation error is not necessarily desired nor is it guaranteed by the CVAE. Nonetheless, we report it anyways to give the reader a sense of how much the CVAE can deviate within the learned perturbation set. 

%, and so minimizing the expected value in the CVAE objective alone is insufficient to satisfy the minimal superset property. 

%Rather than bounding the worst case overapproximation error for the minimal superset property, we can instead prove an \emph{expected} variant over the perturbation set as shown in Theorem \ref{thm:cvaesuperset_short}, which states that the expected approximation error over the perturbation set defined by the prior can be bounded by the CVAE loss. 

\section{Evaluating a perturbation set}
\label{app:perturbation}
In this section we describe in more detail the process of selecting a radius for a learned perturbation set and formally describe all the details regarding the evaluation metrics for a perturbation set learned with a CVAE. As a reminder, let $\mu(x,\tilde x), \sigma(x, \tilde x)$ be the posterior encoding, $\mu(x),\sigma(x)$ be the prior encoding, and $g(x,z)$ be the decoder for the CVAE. We will use $u$ to represent the latent space before the parameterization trick. Then the perturbation set defined by our generative process is 
$$\mathcal S(x) = \{ g(z,x) : z = u \cdot \sigma(x) + \mu(x),\;\|u\| \leq \epsilon\}$$
and the probabilistic perturbation set is defined by the truncated normal distribution $\mathcal N_\epsilon$ before the parameterization trick as follows, 
$$\mathcal S(x) \sim \textrm{Normal}(g(z,x), I),\quad z = u \cdot \sigma(x) + \mu(x),\quad u \sim \mathcal N_\epsilon(0,I)$$

\paragraph{Selecting a radius} To select a radius, we take the following conservative estimate calculated on a held-out validation set, which computes the smallest $\ell_2$ ball under the prior which contains all the mean encodings under the posterior: 
$$\epsilon = \max_{i} \left\| \frac{\mu(x_i,\tilde x_i) - \mu(x_i)}{\sigma(x_i)}\right\|_2$$
The benefits of such a selection is that by taking the maximum, we are selecting a learned perturbation set that includes every point with approximation error as low as the posterior encoder. To some extent, this will also capture additional types of perturbations beyond the perturbed data, which can be both beneficial and unwanted depending on what is captured. However, in the context of adversarial attacks, using a perturbation set which is too large is generally more desirable than one which is too small, in order to not underspecify the threat model. 

\paragraph{Evaluation metrics}
We present each evaluation metric in detail for a single perturbed pair $(x, \tilde x)$. These evaluation metrics can then be averaged over the test dataset to produce the evaluations presented in this paper. 
\begin{enumerate}
\item {\bf Encoder approximation error (Enc. AE)} We can get a fast upper bound of approximation error by taking the posterior mean, unparameterizing it with respect to the prior, and projecting it to the $\epsilon$ ball as follows: 
\begin{align*}
\textrm{Enc. AE}(x, \tilde x) &\equiv \|\tilde x - g(z,x)\|^2\\
\textrm{where}\;&  z =u \cdot \sigma(x) + \mu(x),\\
& u = \textrm{Proj}_\epsilon\left(\frac{\mu(x, \tilde x) - \mu(x)}{\sigma(x)}\right)
\end{align*}

\item {\bf PGD approximation error (PGD AE)} We can refine the upper bound by solving the following problem with projected gradient descent: 
\begin{align*}
\textrm{PGD AE}(x, \tilde x) &\equiv \min_{\|u\|\leq \epsilon} \|\tilde x - g(z,x)\|^2\\
\textrm{where}\;&  z =u \cdot \sigma(x) + \mu(x)
\end{align*}
In practice, we implement this by warm starting the procedure with the solution from the encoder approximation error, and run $50$ iterations of PGD at step size $\epsilon/20$. 

\item {\bf Expected approximation error (EAE)} We can compute this by drawing $N$ samples $u_i \sim \mathcal N_\epsilon(0,I,\epsilon)$ and calculating the following Monte Carlo estimate: 
\begin{align*}
\textrm{EAE}(x, \tilde x) &\equiv \frac{1}{N}\sum_{i=1}^N \|\tilde x - g(z_i,x)\|^2\\
\textrm{where}\;&  z_i =u_i \cdot \sigma(x) + \mu(x)
\end{align*}
In practice, we find that $N=5$ is sufficient for reporting means over the dataset with near-zero standard deviation. 

\item {\bf Over approximation error (OAE)} We can compute this by performing a typical $\ell_2$ PGD adversarial attack: 
\begin{align*}
\textrm{OAE}(x, \tilde x) &\equiv \max_{\|u\|\leq \epsilon} \|\tilde x - g(z,x)\|^2\\
\textrm{where}\;&  z =u \cdot \sigma(x) + \mu(x)
\end{align*}
In practice, we implement this by doing a random initialization and run $50$ iterations of PGD at step size $\epsilon/20$. 

\item {\bf Reconstruction error (Recon. err)} This is the typical reconstruction error of a variational autoencoder, which is a Monte Carlo estimate over the full posterior with one sample: 
\begin{align*}
\textrm{Recon. err}(x, \tilde x) &\equiv \frac{1}{m}\|\tilde x - g(z,x)\|^2\\
\textrm{where}\;&  z =u \cdot \sigma(x,\tilde x) + \mu(x, \tilde x)\\
& u \sim \mathcal N(0,I)
\end{align*}
Note that we report the average over all pixels to be consistent with the other metrics in this paper, however it is typical to implement this during training as a sum of squared error instead of a mean. 

\item {\bf KL divergence (KL)} This is the standard KL divergence between the posterior and the prior distributions
$$KL(x, \tilde x) \equiv KL(\mathcal{N}(\mu(x,\tilde x), \sigma(x,\tilde x)\|\mathcal{N}(\mu(x), \sigma(x)))$$
\end{enumerate}

\section{Adversarial training, randomized smoothing, and data augmentation with learned perturbation sets}
\label{app:pseudocode}
In this section, we describe how various standard, successful techniques in robust training can applied to learned perturbation sets. We note that each method is virtually unchanged, with the only difference being the application of the method to the latent space of the generator instead of directly in the input space. 

\begin{algorithm}
\caption{Given a dataset $D$, perform an epoch of adversarial training with a learned perturbation set given by a generator $g$ and a radius $\epsilon$ with step size $\gamma$, using a PGD adversary with $T$ steps and step size $\alpha$}
\label{alg:adv_training}
\begin{algorithmic}
  \For{$(x,y) \in D$}
  \State $\delta \coloneqq 0$ \emph{// Initialize perturbation}
  \For{$t \in T$}
  \State $g \coloneqq \nabla_\delta \ell(h(g(z+\delta,x)),y)$ \emph{// Calculate gradient}
  \State $\delta \coloneqq \delta + \alpha \cdot g / \|g\|_2$ \emph{// Take a gradient step}
  \If {$\|\delta\|_2 > \epsilon$}
  \State $\delta \coloneqq \epsilon \cdot \delta / \|\delta\|_2$ \emph{// Project onto $\epsilon$ ball}
  \EndIf
  \EndFor
  \State $\theta \coloneqq \theta - \gamma \nabla_\theta \ell(h(g(z+\delta,x)),y)$ \emph{// Optimize model weights}
  \EndFor
\end{algorithmic}
\end{algorithm}

\subsection{Adversarial training}
Adversarial training, or training on adversarial examples generated by an adversary, is a leading empirical defenses for learning models which are robust to adversarial attacks. The adversarial attack is typically generated with a PGD adversary and is used in the context of $\ell_p$ adversarial examples. We follow closely the adversarial training formulation from \citep{madry2017towards}. In order to do adversarial training on learned perturbation sets, it suffices to simply run the PGD adversarial attack in the latent space of the generator, as shown in Algorithm \ref{alg:adv_training}. 
%
%\begin{algorithm}
%\caption{Given a dataset $D$ with $k$ classes, make a prediction with a classifier which is smoothed over a learned perturbation set given by a generator $g$ and a radius $\epsilon$ using a noise level $\sigma$ with $n$ samples for prediction with probability at least $1-\alpha$}
%\label{alg:smoothing}
%\begin{algorithmic}
%  \For{$(x,y) \in D$}
%  \State $\texttt{counts} \coloneqq \textrm{SampleUnderNoise}(h,g,x,n,\sigma)$
%  \State $k,l \leftarrow $ top two indicies in \texttt{counts}
%  \State $l \coloneqq \argmax_{i\neq k_1} c_i$
%  \If {$\textrm{BinomPValue(c_k, c_k + c_l, 0.5) \leq \alpha}$}
%  \State \Return prediction $k$
%  \Else
%  \State \Return ABSTAIN
%  \EndIf
%  \EndFor
%\end{algorithmic}
%\end{algorithm}
%
\begin{algorithm}
\caption{Given a datapoint $x$, pseudocode for certification and prediction for a classifier which has been smoothed over a learned perturbation set given by a generator $g$ and a radius $\epsilon$ using a noise level $\sigma$ with probability at least $1-\alpha$. }
\label{alg:smoothing}
\begin{algorithmic}
  \Function{SampleUnderNoise}{$h,g,x,n,\sigma$}
  \State $c_i \coloneqq 0$ for $i \in [k]$ \emph{// Initialize counts}
  \For{$t \in [n]$}
  \State \emph{// Draw random samples from the generator and count classes for prediction}
  \State $z \coloneqq \mathcal N(0, \sigma^2)$ 
  \State $j \coloneqq \argmax_i h(g(z,x))$ 
  \State $c_j \coloneqq c_j + 1$ 
  \EndFor
  \State \Return $c$
  \EndFunction
  \State
  \Function{Predict}{$h,g,x,n,\sigma,\alpha$}
  \State $c \coloneqq$ \Call{SampleUnderNoise}{$h,g,x,n,\sigma$}
  \State $i,j \coloneqq$ top two indices in $c$
  \If {\Call{BinomPValue}{$c_i, c_i + c_j, 0.5$} $\leq \alpha$}
  \State \Return prediction $i$
  \Else 
  \State \Return ABSTAIN
  \EndIf
  \EndFunction
  \State
  \Function{Certify}{$h,g,x,n_0,n,\sigma,\alpha$}
  \State $c_0 \coloneqq$ \Call{SampleUnderNoise}{$h,g,x,n_0,\sigma$}
  \State $k \coloneqq \argmax_i (c_0)_i$
  \State $c \coloneqq$ \Call{SampleUnderNoise}{$h,g,x,n,\sigma$}
  \State $p_a \coloneqq$ \Call{LowerConfidenceBound}{$c_k',n,1-\alpha$}
  \If {$p_a > 0.5$}
  \State \Return prediction $k$ with radius $\sigma \Psi^{-1}(p_a)$
  \Else
  \State \Return ABSTAIN
  \EndIf
  \EndFunction
\end{algorithmic}
\end{algorithm}

\subsection{Randomized smoothing}
Randomized smoothing is a method for learning models robust to adversarial examples which come with certificates that can prove (with high probability) the non-existance of adversarial examples. We follow closely the randomized smoothing procedure from \citep{cohen2019certified}, which trains and certifies a network by augmented the inputs with large Gaussian noise. In order to do randomized smoothing on the learned perturbation sets, it suffices to simply train, predict, and certify with augmented noise in the latent space of the generator, as shown in Algorithm \ref{alg:smoothing} for prediction and certification. 

The algorithms are almost identical to that propose by \citet{cohen2019certified}, with the exception that the noise is passed to the generator before going through the classifier. Note that \textproc{LowerConfidenceBound}$(k,n,1-\alpha)$ returns a one-sided $1-\alpha$ confidence interval for the Binomial parameter $p$ given a sample $k\sim \textproc{Binomial}(n,p)$, and  \textproc{BinomPValue}$(a,a+b,p)$ returns the $p$ value of a two-sided hypothesis test that $a \sim \textproc{Binomial}(a+b,p)$. 

\section{MNIST}
\label{app:mnist}
In this section, we discuss some benchmark tasks on MNIST, where the perturbation set is mathematically-defined. Since an exact perturbation set will always be better than a learned approximation, the goal here is simply to explore the data and architectural requirements for learning accurate perturbations sets with CVAEs, and establish a baseline where the ground truth is known. We consider learning the classic $\ell_\infty$ perturbation 
and rotation-translation-skew (RTS) perturbations %with spatial transformer networks 
\citep{jaderberg2015spatial}. 

We use the standard MNIST dataset consisting of 60,000 training examples with 1,000 examples randomly set aside for validation purposes, and 10,000 examples in the test set. These experiments were run on a single GeForce RTX 2080 Ti graphics card, with the longest perturbation set taking 1 hour to train. 

\subsection{Experimental details}
\label{app:mnist_details}
\paragraph{$\ell_\infty$ details}
For the $\ell_\infty$ setting, perturbations with radius $\epsilon = 0.3$. Our fully connected network encoders have one hidden layer with $784$ hidden units, then two linear modules with $784$ outputs each for computing the mean and log variance of the latent space distributions. The generator network has the same structure, one hidden layer with $784$ hidden units. The posterior and generator networks simply concatenate their inputs into a single vector. 

The encoders for the convolutional network for the $\ell_\infty$ setting uses two hidden convolutional layers with $32$ and $64$ channels, each followed by ReLU and max pooling. To generate the mean and log variance, there are two final convolution with $128$ channels. The posterior network concatenates its inputs via the channel dimension. The generator has two analogous hidden layers using convolutions with 64 and 32 channels followed by ReLU and nearest neighbor upsampling. One final convolution reduces the number of channels to 1, and the conditioned input $x$ is downsampled with $1\times 1$ convolutions and concatenated with the feature map before each convolution. All convolutions other than the downsampling convolutions use a $3 \times 3$ kernel with one padding. 

Both networks are trained for 20 epochs, with step size following a piece-wise linear schedule of $[0, 0.001, 0.0005, 0.0001]$ over epochs $[0, 10, 15, 20]$ with the Adam optimizer using batch size 128. The KL divergence is weighted by $\beta$ which follows a piece-wise linear schedule from $[0,0.001,0.01]$ over epochs $[0,5,20]$.  

\paragraph{RTS details}
For the RTS setting, we follow the formulation studied by \citet{jaderberg2015spatial}, which consists of a random rotation between $[-45, 45]$ degrees, random scaling factor between $[0.7, 1.3]$, and random placement in a $42\times 42$ image. The architecture uses a typical pre-activation convolution block, which consists of 2 convolutions with reflection padding which are pre-activated with batchnorm and ReLU. The encoders uses a single block followed by two linear modules to $128$ units to generate the mean and log variance, where the posterior network concatenates its inputs via the channel dimension. The generator has a linear layer to map the latent vector back to a $42\times 42$ feature map, followed by five parallel spatial transformers which use the latent vector to transform the conditioned input. Multiple spatial transformers here are used simply because training just one can be inconsistent. The outputs of the spatial transformers are concatenated via the channel dimension, followed by a convolutional block and one final convolution to reduce it back to one channel. 

All networks are trained for 100 epochs with cyclic learning rate peaking at $0.0008$ on the $40$th epoch with the Adam optimizer using batch size 128. The KL divergence is weighted by $\beta$ which follows a piece-wise linear schedule from $[0,0.01,1,1]$ over epochs $[0,10,50,100]$.  

\begin{figure}[t]
  \centering
  \includegraphics[scale=1]{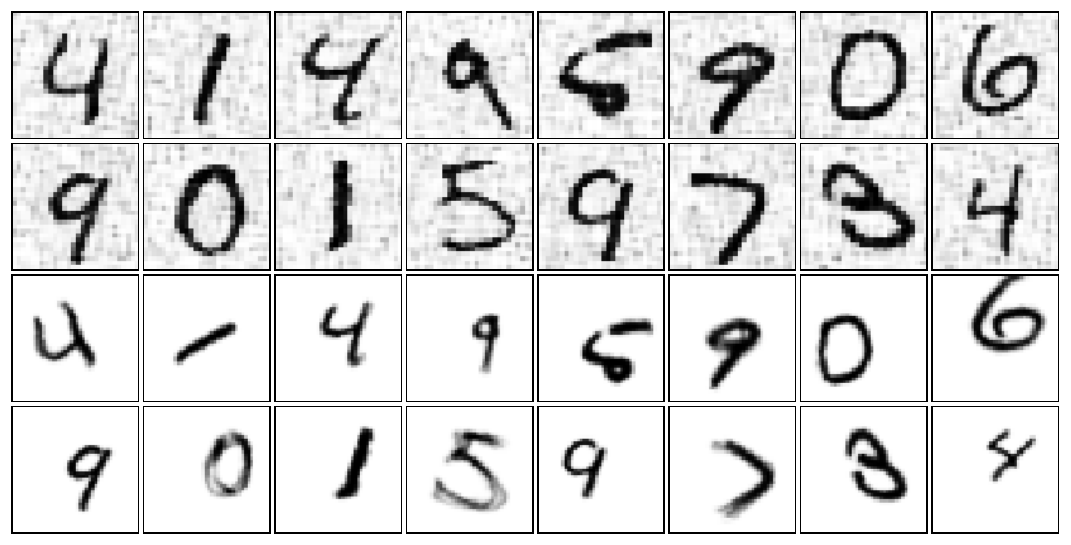}
  \caption{Samples of $\ell_\infty$ (top) and RTS (bottom) perturbations for MNIST from the convolutional $\ell_\infty$ and RTS-1 models.}
  \label{fig:mnist_samples}
\end{figure}

\subsection{Samples and visualizations}
\label{app:mnist_samples}
Figure \ref{fig:mnist_samples} plots samples from the convolutional model producing $\ell_\infty$ perturbations and the RTS model trained with only one perturbed datapoint per example. We note that the final $\beta$ value for the $\ell_\infty$ perturbation was tuned to produce reasonable samples in order to balance the KL divergence and reconstruction error, likely due to the inherent difficulty of this setting.

\begin{table}[t]
  \caption{Evaluation of a perturbation set learned from MNIST examples perturbed by $\ell_\infty$ noise with $\epsilon=0.3$ and RTS perturbations. The standard deviation is less than 0.01 for all metrics except KL divergence.}
  \label{table:mnist_evaluate}
  \centering
  \begin{tabular}{lcrrrrrr}
    \toprule
    && \multicolumn{4}{c}{Test set quality metrics} & \multicolumn{2}{c}{Test set CVAE metrics}                  \\
    \cmidrule(r){3-6}\cmidrule(r){7-8}
    Model                    & $\epsilon$ & Enc. AE & PGD AE & EAE & OAE & Recon. err & KL \\%& LL \\
    \midrule
    Fully connected $\ell_\infty$ & $28$ & $0.31$ & $0.25$ & $0.32$ & $0.65$ &  $0.27$ & $585.5 \pm 0.29$ \\%& $71$ \\
    Convolutional  $\ell_\infty$ & $29$ & $0.30$ & $0.27$ & $0.32$ & $0.35$ & $0.27$ & $610.3 \pm 0.10$ \\%& $163$\\
    RTS         & $14$ & $0.29$ & $0.11$ & $0.54$ & $1.03$ & $0.04$ & $22.2\pm 0.02$ \\
    RTS-1       & $14$ & $0.28$ & $0.10$ & $0.61$ & $1.34$ & $0.05$ & $22.2 \pm 0.03$ \\
    RTS-5       & $14$ & $0.28$ & $0.10$ & $0.53$ & $0.86$ & $0.05$ & $23.7 \pm 0.02$ \\%& $197$\\
    \bottomrule
  \end{tabular}
\end{table}

\subsection{Evaluating the perturbation set}
\label{app:mnist_evaluation}
Table \ref{table:mnist_evaluate} contains the full tabulated results evaluating the MNIST perturbation sets. The KL divergence for the $\ell_\infty$ perturbation sets are much higher due to the KL divergence being weighted by $0.01$ in order to produce reasonable samples. We note that the convolutional network and fully connected network evaluate to about the same, except that the convolutional network has a much lower overapproximation error and is likely restraining the size of the learned perturbation set. 

The RTS-1 results demonstrate how fixing the number of perturbations seen during training to one per datapoint is still enough to learn a perturbation set with as much approximation error as a perturbation set with an infinite number of samples, denoted RTS. Increasing the number of perturbations to five per datapoint allows the remaining metrics to match the RTS model, and so this suggests that not many samples are needed to learn a perturbation set in this setting. 

%We defer most of the MNIST results and discussion to Appendix \ref{app:mnist}, but we highlight one interesting finding: a CVAE trained on just one fixed perturbation per datapoint can already achieve low approximation error on par with infinitely many perturbations per datapoint. Increasing this from one to five perturbations per datapoint can match all remaining evaluation metrics, which suggests that only a modest amount of perturbed data is needed to learn perturbation sets. Samples from the learned perturbation sets can be found in Figure \ref{fig:mnist_samples} of Appendix \ref{app:mnist_samples}, and a complete quantitative evaluation and additional architectural experiments are in Table \ref{table:mnist_evaluate} of Appendix \ref{app:mnist_evaluation}. 

\section{CIFAR10 common corruptions}
\label{app:cifar10c}
In this section, we describe the CIFAR10 common corruptions setting and experiments in greater detail. The dataset comes with 15 common corruptions covering noise, blurs, weather, and digital corruptions. We omit the three noise corruptions due to similarity to unstructured $\ell_p$ noise, leaving us with the following 12 corruptions for each example: 
\begin{enumerate}
\item Blurs: defocus blur, glass blur, motion blur, zoom blur
\item Weather: snow, frost, fog
\item Digital: brightness, contrast, elastic, pixelate, jpeg
\end{enumerate}
The dataset comes with different corruption levels, so we focus on the highest severity, corruption level five, so the total training set has 600,000 perturbations (12 for each of 50,000 examples) and the test set has 120,000 perturbations. The dataset also comes with several additional corruptions meant to be used as a validation set, however for our purposes will serve as a way to measure performance on out-of-distribution corruptions. These are Gaussian blur, spatter, and saturate which correspond to the blur, weather, and digital categories respectively. We generate a validation set from the training set by randomly setting aside $1/50$ of the CIFAR10 training set and all of their corresponding corrupted variants. These experiments were run on a single GeForce RTX 2080 Ti graphics card, taking 22 hours to train the CVAE and 28 hours to run adversarial training. 

\begin{table}[t]
\parbox{.45\linewidth}{
  \caption{Prior and encoder architecture for learning CIFAR10 common corruptions}
  \label{table:cifar10encoder}
  \centering
  \begin{tabular}{cc}
    \toprule
    $\networkinput(3\times 32\times 32)$ & $\networkinput(k\times 32\times 32)$\\
    \cmidrule(r){1-1} \cmidrule(r){2-2}
    \multicolumn{2}{c}{$\concat(3,k)$}\\
    \midrule
    \multicolumn{2}{c}{$\conv(64)$}\\
    \midrule
    \multicolumn{2}{c}{$4 \times \residual(64)$}\\
    \midrule
    \multicolumn{2}{c}{$\conv(16)$}\\
    \cmidrule(r){1-1} \cmidrule(r){2-2}
    $\fc(512)$ & $\fc(512)$\\
    \bottomrule
  \end{tabular}
  }\hfill
  \parbox{.45\linewidth}{
  \caption{Decoder architecture for learning CIFAR10 common corruptions}
  \label{table:cifar10decoder}
  \centering
  \begin{tabular}{cc}
    \toprule
    $\networkinput(500)$ & \\
    \cmidrule(r){1-1} 
    $\fc(1\times 32\times 32)$ & $\networkinput(3\times 32\times 32)$\\
    \cmidrule(r){1-1} \cmidrule(r){2-2}
    \multicolumn{2}{c}{$\concat(1, 3)$}\\
    \midrule
    \multicolumn{2}{c}{$\conv(64)$}\\
    \midrule
    \multicolumn{2}{c}{$4 \times \residual(64)$}\\
    \midrule
    \multicolumn{2}{c}{$\conv(3)$}\\
    \bottomrule
  \end{tabular}
  }
\end{table}

%\begin{table}[t]
%\end{table}

\subsection{Perturbation model architecture and training specifics}
\label{app:cifar10c_perturbation_set}
We use standard preactivation residual blocks \citep{he2016identity}, with a convolutional bottleneck which reduces the number of channels rather than downsampling the feature space. This aids in learning to produce CIFAR10 common corruptions since the corruptions are reflective of local rather than global changes, and so there is not so much benefit from compressing the perturbation information into a smaller feature map. Specifically, our residual blocks uses convolutions that go from $64 \rightarrow 16 \rightarrow 64$ channels denoted as $\textrm{Residual}(64)$. Then, our encoder and prior networks are as shown in Table \ref{table:cifar10encoder}, 
%$$\textrm{Conv}(k,64)\rightarrow 4 \times \textrm{Residual}(64,16) \rightarrow \textrm{Conv}(64,16) \rightarrow (\textrm{FC}(512), \textrm{FC}(2\times 512))$$
where $k=0$ for the prior network and $k=3$ for the encoder network, and the decoder network is as shown in Table \ref{table:cifar10decoder}. 

\paragraph{Stabilizing the exponential function} Note that the CVAE encoders output the log variance of the prior and posterior distributions, which need to be exponentiated in order to calculate the KL distance. This runs the risk of of numerical overflow and exploding gradients, which can suddenly cause normal training to fail. To stabilize the training procedure, it is sufficient to use a scaled Tanh activation function before predicting the log variance, which is a Tanh activation which has been scaled to output a log variance betwen $[\ln(10^{-3}), \ln(10)]$. This has the effect of preventing the variance from being outside the range of $[10^{-3},10]$, and stops overflow from happening. In practice, the prior and posterior converge to a variance within this range, and so this doesn't seem to adversely effect the resulting representative power of the CVAE while stabilizing the exponential calculation. 

Training is done for 1000 epochs using a cyclic learning rate \citep{smith2017cyclical}, peaking at $0.001$ on the 400th epoch using the Adam optimizer \citep{kingma2014adam} with momentum $0.9$ and batch size $128$. The hyperparameter $\beta$ is also scheduled to increase linearly from $0$ to $10^{-2}$ over the first 400 epochs. We use standard CIFAR10 data augmentation with random cropping and flipping. 

\begin{table}[t]
  \caption{Measuring and comparing quality metrics for a CVAE perturbation set trained on CIFAR10 common corruptions with different pairing strategies depending on the available information.}
  \label{table:cifar10c_evaluate}
  \centering
  \begin{tabular}{lcrrrrrr}
    \toprule
    && \multicolumn{4}{c}{Test set quality metrics} & \multicolumn{2}{c}{Test set CVAE metrics}                  \\
    \cmidrule(r){3-6}\cmidrule(r){7-8}
    Method                    & $\epsilon$ & Enc. AE & PGD AE & EAE & OAE & Recon. err & KL \\%& LL \\
    \midrule
    Centered at original & $28$ & $0.005$ & $0.006$ & $0.029$ & $0.17$ & $5.73\cdot 10^{-4}$ & $69.31$ \\%& $71$ \\
    Original + perturbed & $34$ & $0.019$ & $0.010$ & $0.035$ & $0.20$ & $8.33\cdot 10^{-4}$ & $160.60$ \\%& $163$\\
    Perturbed only       & $28$ & $0.019$ & $0.007$ & $0.038$ & $0.17$ & $8.53\cdot 10^{-4}$ & $182.85$ \\%& $197$\\
    \bottomrule
  \end{tabular}
\end{table}
\subsection{Pairing strategies}
\label{app:cifar10_pairing}
Here, we quantitatively evaluate the effect of the different pairing strategies for the CIFAR10 common corruptions setting when training a perturbation set with a CVAE in Table \ref{table:cifar10c_evaluate}. We first note that the approach using random pairs of both original and perturbed data has a larger radius than the others, likely since the generator needs to produce not just the perturbed data but also the original as well. We next see that all metrics across the board are substantially lower for the approach which centers the prior by always conditioning on the unperturbed example, and so if such an unperturbed example is known, we recommend conditioning on said example to learn a better perturbation set. 

\begin{table}[t]
  \caption{CIFAR10 common corruptions validation set statistics for the $\ell_2$ norm of the latent space encodings.}
  \label{table:cifar10c_statistics}
  \centering
  \begin{tabular}{lcrrrrrr}
    \toprule
    Method & $\beta$ & Mean & Std & $25\%$  & $50\%$ & $75\%$ & Max \\%& LL \\
    \midrule
    Centered at original & $0.01$ & $6.53$ & $5.22$ & $2.69$ & $3.85$ & $10.16$ & $26.61$ \\
    Original + perturbed & $0.01$ & $11.50$ & $4.52$ & $7.81$ & $10.95$ & $14.48$ & $31.30$ \\
    Perturbed only & $0.01$ & $11.04$ & $3.81$ & $7.93$ & $10.82$ & $13.46$ & $29.57$ \\
    \bottomrule
  \end{tabular}
\end{table}

\subsection{Properties of the CVAE latent space}
\label{app:cifar10_latent}
We next study closely the properties of the underlying latent space which has been trained to generate CIFAR10 common corruptions. We summarize a number of statistics in Table \ref{table:cifar10c_statistics}, namely the mean, standard deviation, and some percentile thresholds where we can clearly see that the latent vectors for the CVAE centered around the original are much smaller in $\ell_2$ norm and thus learns the most compact perturbation set. We remind the reader that maximum $\ell_2$ norm reported here is calculated on the test set and is not necessarily the same as the radius reported in Table \ref{table:cifar10c_evaluate} since the radius is selected based on a validation set. 

%\paragraph{Latent space statistics for the perturbation set}

\begin{figure}[t]
  \centering
  \begin{subfigure}{.3\textwidth}
  \centering
  % include first image
  %\fbox{\rule[-.5cm]{0cm}{3cm} \rule[-.5cm]{3cm}{0cm}}
  \includegraphics[scale=1]{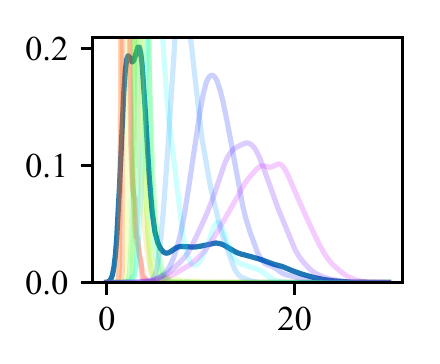}
  \caption{CVAE centered at original
  }
  \label{fig:cifar10c_latent_original}
\end{subfigure}
\begin{subfigure}{.3\textwidth}
  \centering
  % include second image
  \includegraphics[scale=1]{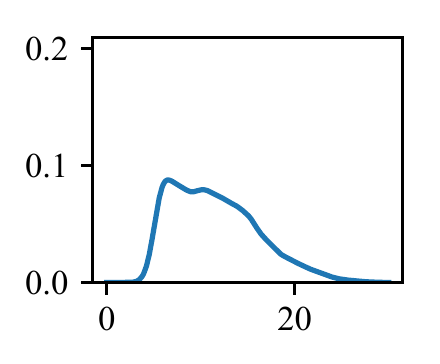}
  \caption{CVAE original + perturbed}
\end{subfigure}
\begin{subfigure}{.3\textwidth}
  \centering
  \includegraphics[scale=1]{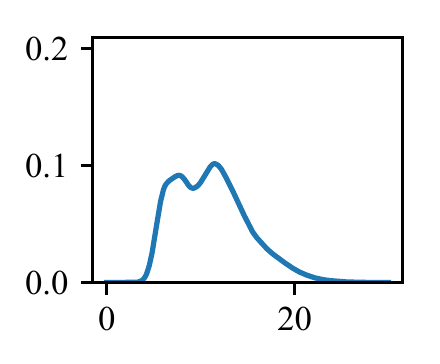}
  \caption{CVAE perturbed only}
\end{subfigure}
  \caption{Distribution of $\ell_2$ norms for latent encodings of CIFAR10 common corruptions on the test set for each pairing strategy.}
  \label{fig:cifar10c_latent}
\end{figure}

\paragraph{Distribution of $\ell_2$ distances in latent space} We plot the distribution of the $\ell_2$ norm of the latent space encodings in Figure \ref{fig:cifar10c_latent}, and further find the centered CVAE has semantic latent structure. Specifically, corruptions can be ordered by their $\ell_2$ norm in the latent space where each corruption inhabits a particular range of distance, so reducing the radius directly constrains the type of corruptions in the perturbation set. For example, restricting the perturbation set to radius $\epsilon=5$ corresponds to leaving out frost fog, snow, elastic, and glass blur corruptions, while containing all defocus blur, jpeg compression, and motion blur corruptions. A larger version of Figure \ref{fig:cifar10c_latent_original} with a complete legend listing all the corruptions is in Figure \ref{fig:density_large}. 

\begin{figure}[t]
  \centering
  \includegraphics[scale=1]{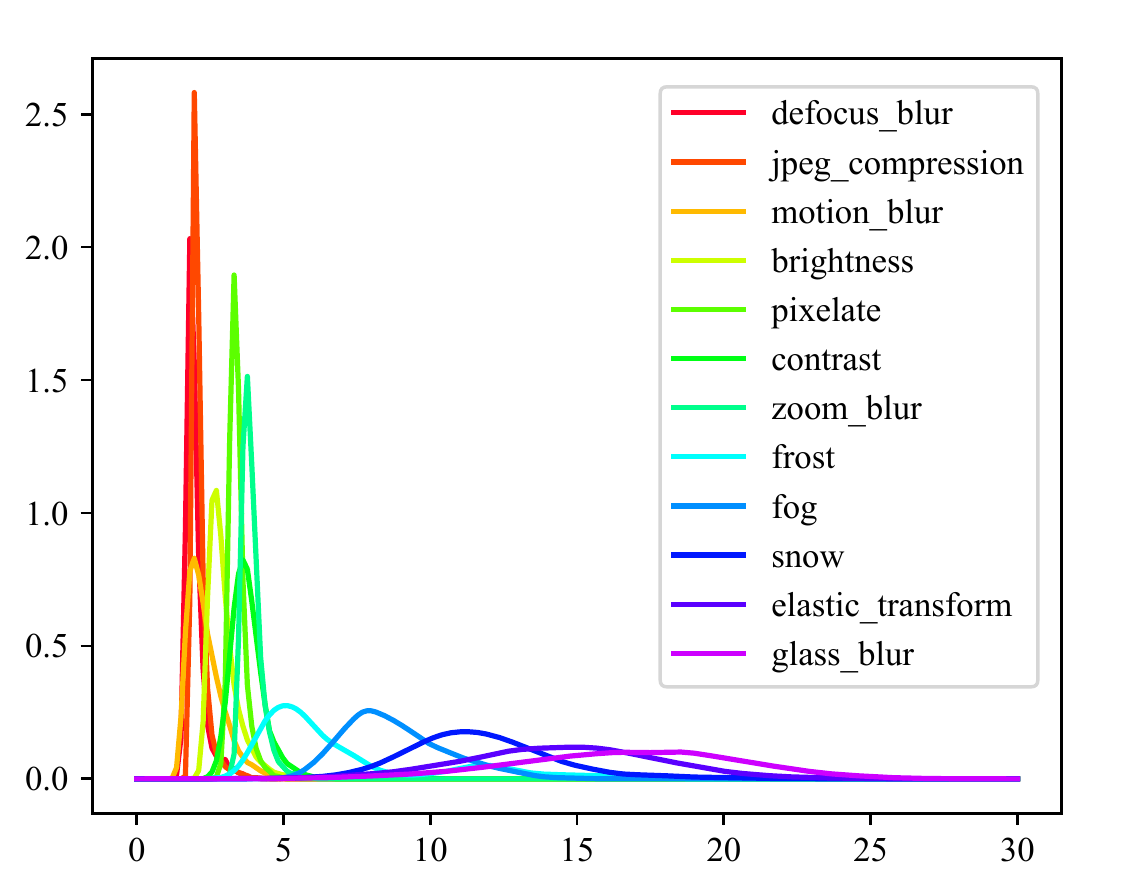}
  \caption{A larger version of the density of the $\ell_2$ norms of latent encodings of CIFAR10 common corruptions, broken down by type of corruption.}
  \label{fig:density_large}
\end{figure}

\begin{figure}[t]
  \centering
  \includegraphics[scale=1]{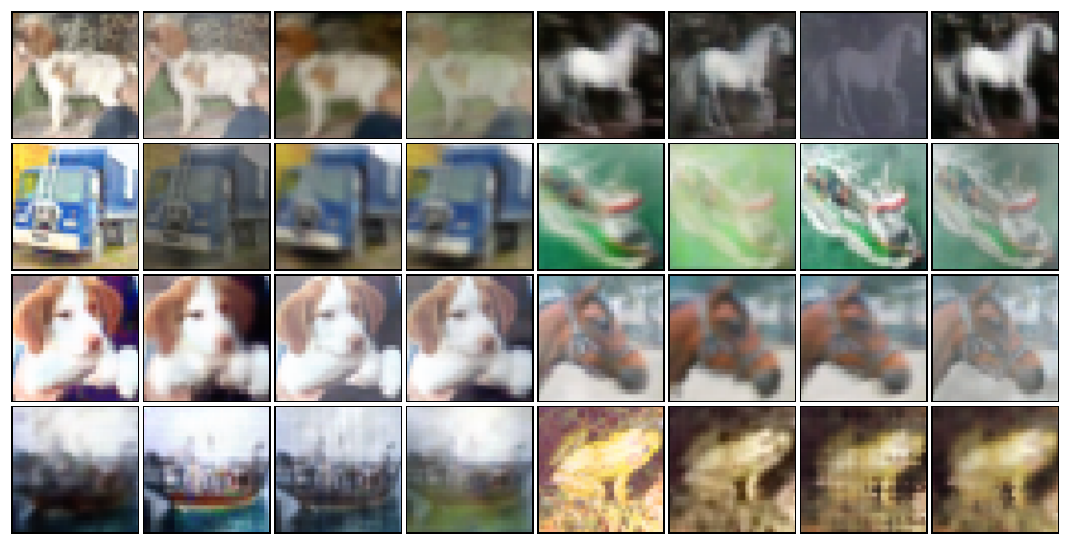}
  \caption{Random perturbations from the CVAE prior, showing four random samples for each example. }
  \label{fig:cifar10c_samples}
\end{figure}

\begin{figure}[t]
  \centering
  \includegraphics[scale=1]{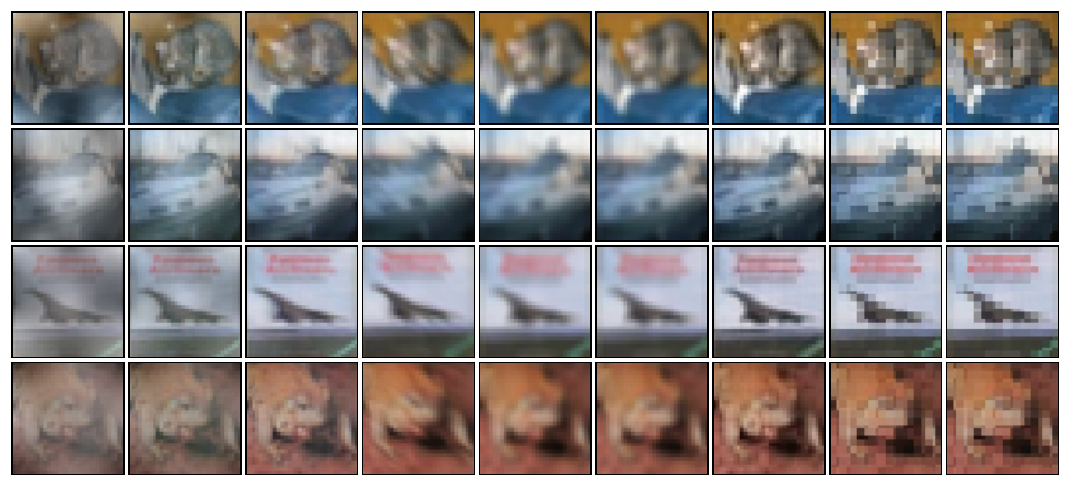}
  \caption{Interpolations between fog (left), defocus blur (middle), and pixelate (right) corruptions as representative distinct types of corruptions from the weather, blur, and digital corruption categories.}
  \label{fig:cifar10c_interpolations}
\end{figure}

\paragraph{Additional samples and interpolations from the CVAE}
To further illustrate what corruptions are represented in the CVAE latent space, we plot additional samples and interpolations from the CVAE. In Figure \ref{fig:cifar10c_samples} we draw four random samples for eight different examples, which show a range of corruptions and provide qualitative evidence that the perturbation set generates reasonable samples. In Figure \ref{fig:cifar10c_interpolations} we see a number of examples being interpolated between weather, blur, and digital corruptions, which also demonstrate how the perturbation set not only includes the corrupted datapoints, but also variations and interpolations in between corruptions.

\begin{figure}[t]
  \centering
  \includegraphics[scale=1]{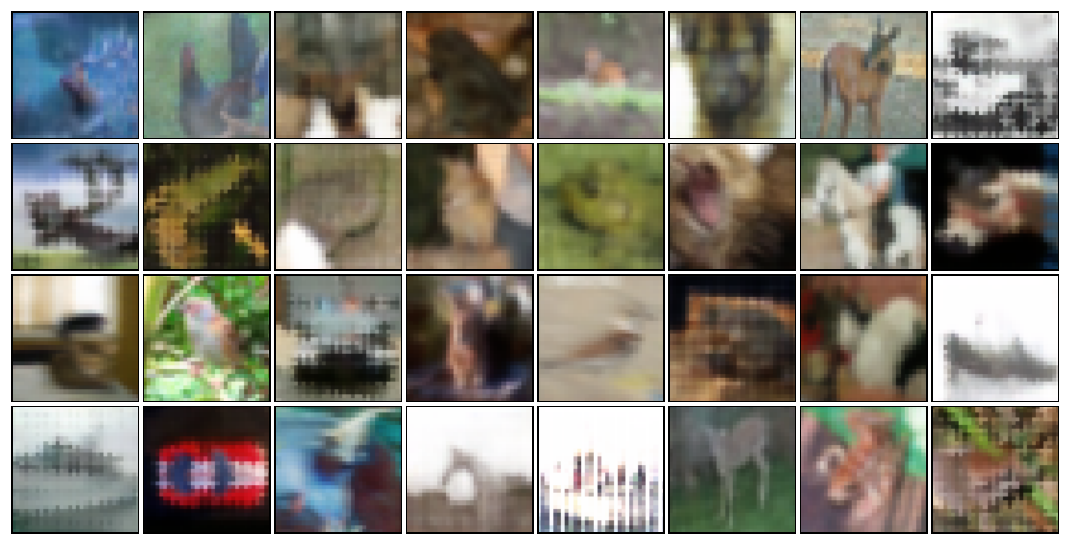}
  \caption{Adversarial examples that cause misclassification for an adversarially trained classifier.}
  \label{fig:cifar10c_adversarial}
\end{figure}

%
%\begin{table}[t]
%  \caption{Adversarial robustness to CIFAR10 common corruptions with a CVAE perturbation set.}
%  \label{table:cifar10c_robustness}
%  \centering
%  \begin{tabular}{lrrrrrr}
%    \toprule
%    & \multicolumn{3}{c}{Test set accuracy $(\%)$} & \multicolumn{3}{c}{Test set robust accuracy $(\%)$}                   \\
%    \cmidrule(r){2-4} \cmidrule(r){5-7}
%    Method                    & Clean & Perturbed & OOD & $\epsilon=2.7$ & $\epsilon=3.9$ & $\epsilon=10.2$\\
%    \midrule
%    Data augmentation             & $90.6$ & $87.7$ & $85.0$ & $42.4$ & $37.2$ & $17.8$ \\
%    CVAE data augmentation        & $94.5$ & $90.5$ & $89.6$ & $68.6$ & $63.3$ & $43.4$\\
%    CVAE adversarial training     & $94.6$ & $90.3$ & $89.9$ & $72.1$ & $66.1$ & $55.6$\\
%    \bottomrule
%  \end{tabular}
%\end{table}
% back to default
%\setlength{\tabcolsep}{6pt}

% old perturbed on random subsample
% $87.5$ 
% $90.5$
% $90.3$

\subsection{Adversarial training}
\label{app:cifar10c_robust}
For learning a robust CIFAR10 classifier, we use a standard wide residual network \citep{zagoruyko2016wide} with depth 28 and width factor 10. All models are trained with the Adam optimizer with momentum 0.9 for 100 epochs with batch size 128 and cyclic learning rate schedule which peaks at 0.2 at epoch 40. This cyclic learning rate was tuned to optimize the validation performance for the data augmentation baseline, and kept fixed as-is for all other approaches. We use validation-based early stopping for all approaches to combat robust overfitting \citep{rice2020overfitting}. However, we find that in this setting and also likely due to the cyclic learning rate, we do not observe much overfitting. Consequently, the final model at the end of training ends up being selected for all methods regardless. Adversarial training is done with a PGD adversary at the full radius of $\epsilon=28$ with step size $\epsilon/5=5.6$ and 7 iterations. At evaluation time, for a given radius $\epsilon$ we use a PGD adversary with 50 steps of size $\epsilon/20$. Additional examples of adversarial attacks under this adversary are in Figure \ref{fig:cifar10c_adversarial}, which still appear to be reasonable corruptions despite being adversarial.  

\subsection{Comparisons to other baselines}
\label{app:baselines}
One may be curious as to how these results compare to other methods for learning models which are robust to \emph{different} threat models. In this section, we elaborate more on the comparison to other baselines. We preface this discussion with the following disclaimers: 
\begin{enumerate}
\item It \emph{not} necessarily expected for robustness to one threat model to generalize to another threat model. 
\item It is expected that training against a given threat model gives the best results to that threat model, in comparison to training against a different threat model.
\item These alternative baselines have different data assumptions (no access to corrupted data) or solve for a different type of robustness than what is being considered in this paper ($\ell_p$ robustness)
\end{enumerate}
Consequently, the fact that these baselines perform universally worse than the CVAE approaches is unsurprising and confirms our expectations. Nonetheless, we provide this comparison to satiate the readers curiosity. 

As alternative baselines, we compared to standard training, methods in general data augmentation (AugMix), $\ell_2$ adversarial robustness, and $\ell_\infty$ adversarial robustness for CIFAR10. For $\ell_2$ and $\ell_\infty$ adversarially robust models, we download pretrained state-of-the-art models ranked by the AutoAttack\footnote{\url{https://github.com/fra31/auto-attack}} leaderboard. For AugMix, there is no released CIFAR10 model, so we retrained an AugMix model using the official AugMix implementation\footnote{ \url{https://github.com/google-research/augmix}} using the parameters recommended by the authors. We note that the AugMix results for CIFAR10 in this paper are worse than what is reported by \citet{hendrycks2019augmix}, which is a known issue in the repository that others have encountered (see \url{https://github.com/google-research/augmix/issues/15}). 

\begin{table}[t]
  \caption{Certified robustness to CIFAR10 common corruptions with a CVAE perturbation set.}
  \label{table:cifar10c_certified}
  \centering
  \begin{tabular}{lrrrrrr}
    \toprule
    & \multicolumn{3}{c}{Test set accuracy $(\%)$} & \multicolumn{3}{c}{Test set certified accuracy $(\%)$}                   \\
    \cmidrule(r){2-4} \cmidrule(r){5-7}
    Noise level                    & Clean & Perturbed & OOD & $\epsilon=2.7$ & $\epsilon=3.9$ & $\epsilon=10.2$\\
    \midrule
    $\sigma=0.84$ & $93.8$ & $92.5$ & $84.2$ & $34.7$ & $00.0$ & $00.0$ \\
    $\sigma=1.22$ & $94.0$ & $92.6$ & $83.9$ & $66.1$ & $15.3$ & $00.0$ \\
    $\sigma=3.19$ & $86.7$ & $84.7$ & $65.0$ & $86.7$ & $39.4$ & $00.0$ \\
    \bottomrule
  \end{tabular}
\end{table}

\begin{figure}[t]
  \centering
  \includegraphics[scale=1]{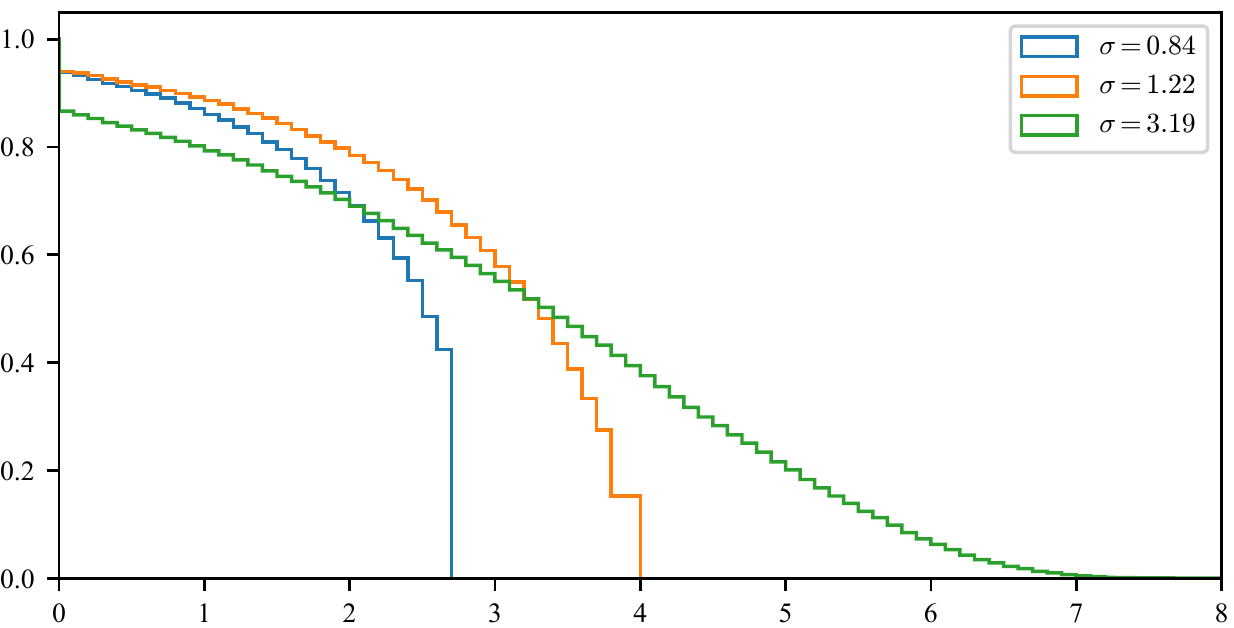}
  \caption{Certified accuracy against CIFAR10 common corruptions using randomized smoothing at various noise levels. The horizontal axis plots the certified radius, and the vertical axis plots the fraction of examples certified at that radius.}
  \label{fig:cifar10c_certified}
\end{figure}

\subsection{Randomized smoothing}
\label{app:cifar10c_certified}
Randomized smoothing is done in its most basic form, with normal Gaussian data augmentation at the specified noise level $\sigma$. We predict labels with $n_0=100$ samples and do certification with $n=10,000$ samples at confidence $\alpha = 0.001$, so each example has a $0.1\%$ chance of being incorrectly certified. Unlike in the typical setting, we have a known maximum radius for the perturbation set. This allows us to backwards engineer a noise level to have a specific maximum certified radius as follows: let $\ell$ be the maximum possible lower bound obtained from a confidence interval at level $1-\alpha$ with $n$ classifications that are in unanimous agreement. Then, the maximum certifiable radius is $r = \sigma \Phi^{-1}(\ell)$ where $\Phi^{-1}$ is the inverse cumulative distribution function of a standard normal distribution. By setting $r=\epsilon$, we can solve for the desired noise level $\sigma = \frac{r}{\Phi^{-1}(\ell)}$ to naturally select a noise level with maximum certified radius $\epsilon$. We do this for the $\epsilon \in \{ 2.7, 3.9, 10.2\}$ which are the $25$th, $50$th, and $75$th percentiles of the perturbation set, and get $\sigma \in \{ 0.84, 1.22, 3.19\}$ respectively. 

%For the certifiably robust models, we highlight the finding that randomized smoothing can also improve non-adversarial accuracy while obtaining non-trivial provable guarantees against worst-case common corruptions. For example, smoothing with a noise level of $\sigma=0.84$ can achieve 92.6\% perturbed accuracy, which is $2\%$ higher than the best empirical approach from Table \ref{table:cifar10c_robustness}. Meanwhile, smoothing with a noise level of $\sigma=3.19$ can certify $39\%$ robust accuracy at $\epsilon=3.9$. Due to the latent structure discussed in Appendix \ref{app:cifar10_latent}, this certificate translates directly to a provable guarantee against defocus blur and jpeg compression. We present a full table of smoothing results, complete certification curves, and details concerning the certification procedure in Appendix \ref{app:cifar10c_certified}. 

The results of randomized smoothing at these thresholds are presented in Table \ref{table:cifar10c_certified}. Note that by construction, each noise level cannot certify a radius larger than what is calculated for. We find that clean accuracy isn't significantly harmed when smoothing at the lower noise levels, and actually improves perturbed accuracy over the adversarial training approach from Table \ref{table:cifar10c_robustness} but with lower out-of-distribution accuracy than adversarial training. For example, smoothing with a noise level of $\sigma=0.84$ can achieve 92.5\% perturbed accuracy, which is $2\%$ higher than the best empirical approach from Table \ref{table:cifar10c_robustness}. Furthermore, all noise levels are able to achieve non-trivial degrees of certified robustness, with the highest noise level being the most robust but also taking a hit to standard test set accuracy metrics. Most notably, due to the latent structure present in the perturbation set as discussed in Appendix \ref{app:cifar10_latent}, even the ability to certify small radii translates to meaningful provable guarantees against certain types of corruptions, like defocus blur and jpeg compression which are largely captured at radius $\epsilon=3.9$. 

We note that although the largest noise level can theoretically reach the 75th percentile of the perturbation set at radius $\epsilon=10.2$, none of the examples can be certified to that extent. This is possibly a limitation of the certification procedure, since although a noise level of $\sigma=3.19$ can theoretically certify a radius of up to $\epsilon=10.2$, the $\ell_2$ norm of the random samples needed to perform randomized smoothing is approximately $23$, which is well beyond the 75th percentile and towards the boundary of the perturbation set. The complete robustness curves for all three noise levels are plotted in Figure \ref{fig:cifar10c_certified}. 

\section{Multi-illumination}
\label{app:mi}
In this final section we present the multi-illumination experiments in greater detail with an expanded discussion. 
We use the test set provided by \citet{murmann2019dataset} which consists of 30 held out scenes and hold out 25 additional ``drylab'' scenes for validation. 
Unlike in the CIFAR10 common corruptions setting, there is no such thing as an ``unperturbed'' example in this dataset so we train on random pairs selected from the 25 different lighting variations for each scene. These experiments were run on a single Quadro RTX 8000 graphics card, taking 16 hours to train the CVAE and 12 hours to run adversarial training. 
%We consider an additional baseline that uses a fixed lighting angle, selected as a frontal lighting angle with no obvious bright spot. 

\subsection{Architecture and training specifics}
\label{app:mi_perturbation_set}
We convert a generic UNet architecture \citep{ronneberger2015u} to use as a CVAE, by inserting the variational latent space in between the skip connections of similar resolutions. At a high level, the encoder and prior networks will be based on the downsampling half of a UNet architecture, the conditional generator will be based on the full UNet architecture, and the components will be linked via the latent space of the VAE. Specifically, our networks have $[64, 128, 256, 512, 512]$ channels when downsampling, and twice the number of channels when upsampling due to the concatenation from the skip connection. 
%Note that we have two downsampling architectures, one each for the prior and posterior. 
Each skip connection passes through a $1 \times 1$ convolution that reduces the number of channels to 16, and an adaptive average pooling layer that reduces the feature maps to height and width $[(127,187), (62,93), (31,46), (15,23), (7,11)]$ respectively. The adaptive average pooling layer has the effect of a null operator for the MIP5 resolution, but allows higher resolutions to use the same architecture. These fixed-size feature maps are then passed through a fully connected layer to output a mean and log variance for the latent space with dimension $[128, 64, 32, 16, 16]$ for each respective feature map. The concatenation of all the latent vectors from each skip connection forms the full, $256$ dimensional latent space vector for the CVAE. Similar to the CIFAR10 setting, we use a scaled Tanh activation to stabilize the log variance calculation. 

The generator of the CVAE UNet is implemented as a typical UNet that takes as input the conditioned example, where intermediate feature maps are concatenated with extra feature maps from the latent space of the CVAE. Specifically, each latent space sub-vector is mapped with a fully connected layer back to a feature map with size $[(127,187), (62,93), (31,46), (15,23), (7,11)]$ with one channel. It is then interpolated to the actual feature map size of the UNet (which is a no-op for the MIP5 resolution) and concatenated to the standard UNet feature map before upsampling. 

We train the model for 1000 epochs with batch size 64, using the Adam optimizer with $0.9$ momentum, weight decay $5\cdot 10^{-7}$, and a cyclic learning rate from $[0.0001,0.001,0]$ over $[0,400,1000]$ epochs. Same as in the CIFAR10 setting, the learning rate and weight decay were chosen to optimize the validation performance of the baseline data augmentation approach and kept fixed as-is for all other methods. We weight the KL divergence with a $\beta$ hyperparameter which is scheduled from $[0,1,1]$ over $[0,400,1000]$, using random flip and crop with padding of $10$ for data augmentation. We then fine tune the model for higher resolutions: for MIP4 we fine tune for 100 epochs with the same learning rate schedule scaled down by a factor of 4, and for MIP3 we fine tune for 25 epochs with the learning rate schedule scaled down by 40. In order to keep the samples looking reasonable, for MIP3 we also increase the $\beta$ weight on the KL divergence to $10$. Random cropping augmentation is proportionately increased with the size of the image, using padding of 20 and 40 for MIP4 and MIP3 respectively.

%\subsection{Perturbation set specifications}
\begin{table}[t]
  \caption{Measuring and comparing quality metrics for a multi-illumination CVAE at different resolutions.}
  \label{table:mi_evaluate}
  \centering
  \begin{tabular}{lccrrrrrr}
    \toprule
    &&\multicolumn{4}{c}{Test set quality metrics} & \multicolumn{2}{c}{Test set CVAE metrics}\\
    \cmidrule(r){3-6}\cmidrule(r){7-8}
    Resolution & $\epsilon$ &  Encoder AE & PGD AE & OAE  & EAE & Recon. err & KL \\%& LL \\
    \midrule
    MIP5 $(125\times 187)$  & $17$ & $0.019$ & $0.006$ & $0.049$ & $0.13$ & $0.0040$ & $65.8$ \\
    MIP4 $(250\times 375)$  & $25$ & $0.034$ & $0.008$ & $0.049$ & $0.27$ & $0.0042$ & $146.6$  \\
    MIP3 $(500\times 750)$  & $21$ & $0.060$ & $0.009$ & $0.048$ & $0.33$ & $0.0055$ & $106.1$\\
%    MIP5 $(125\times 187)$ & $1$ & $17$ & $0.056$ & $0.018$ & $0.13$ & $0.0040$ & $65.8$ \\
%    MIP4 $(250\times 375)$ & $1$ & $25$ & $0.075$ & $0.034$ & $0.27$ & $0.0042$ & $146.6$  \\
%    MIP3 $(500\times 750)$ & $10$ & $21$ & $0.077$ & $0.042$ & $0.33$ & $0.0055$ & $106.1$\\
    \bottomrule
  \end{tabular}
\end{table}

\begin{table}[t]
  \caption{Multi-illumination validation set statistics for the $\ell_2$ norm of the latent space encodings.}
  \label{table:mi_statistics}
  \centering
  \begin{tabular}{lcrrrrrr}
    \toprule
    Resolution & $\beta$ & Mean & Std & $25\%$  & $50\%$ & $75\%$ & Max \\%& LL \\
    \midrule
    MIP5 $(125\times 187)$ & $1$ & $7.42$ & $2.14$ & $5.93$ & $7.35$ & $8.81$ & $16.63$ \\
    MIP4 $(250\times 375)$ & $1$ & $10.88$ & $3.18$ & $8.69$ & $10.75$ & $12.95$ & $24.69$ \\
    MIP3 $(500\times 750)$ & $10$ & $9.11$ & $2.72$ & $7.14$ & $9.00$ & $10.97$ & $20.65$ \\
    \bottomrule
  \end{tabular}
\end{table}

\paragraph{Learning and evaluating an illumination perturbation set at multiple scales}
The results of this fine tuning procedure to learn a perturbation at higher resolutions are summarized in Tables \ref{table:mi_evaluate} and \ref{table:mi_statistics}. As expected, the quality metrics of the perturbation set get slightly worse at higher resolutions which is counteracted to some degree by the increase in weighting for the KL divergence at MIP3. Despite using the same architecture size, the perturbation set is able to reasonably scale to images with 16 times more pixels and generate reasonable samples while keeping relatively similar quality metrics. However, to keep computation requirements at a reasonable threshold, we focus our experiments at the MIP5 resolution, which is sufficient for our robustness tasks.

\begin{figure}[t]
  \centering
  \includegraphics[scale=1]{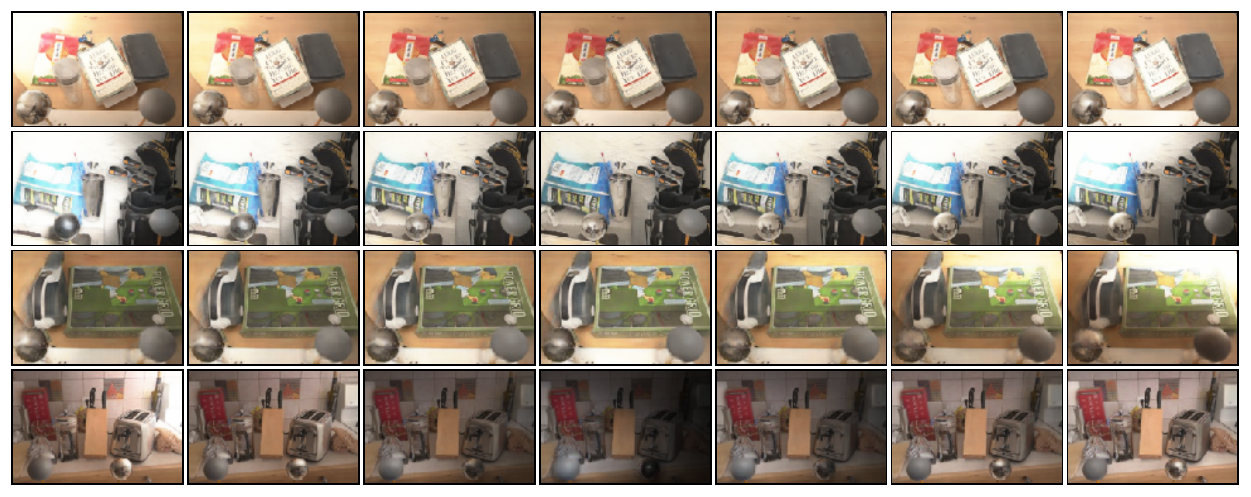}
  \caption{Additional interpolations between three (left, middle, right) randomly chosen lighting perturbations in each row.}
  \label{fig:mi_interpolation}
\end{figure}
 
\begin{figure}[t]
  \centering
  \includegraphics[scale=1]{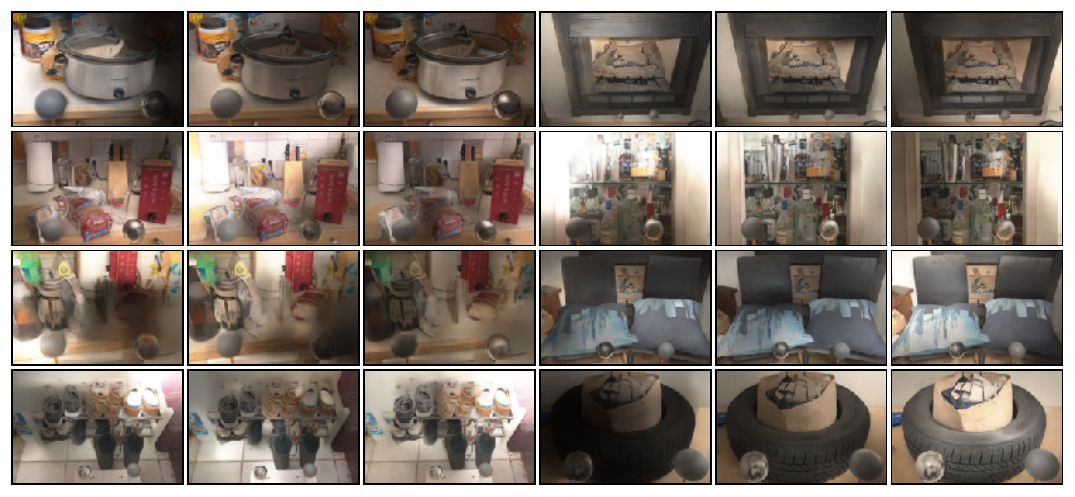}
  \caption{Additional random samples from the CVAE prior showing variety in lighting perturbations in various different scenes. }
  \label{fig:mi_samples}
\end{figure}

\subsection{Additional samples and interpolations from the CVAE}
\label{app:mi_visualizations}
We present additional samples and interpolations of lighting changes learned by the CVAE perturbation set. Figure \ref{fig:mi_interpolation} shows interpolations between three randomly chosen lighting perturbations for four different scenes, while Figure \ref{fig:mi_samples} shows 24 additional random samples from the perturbation set, showing a variety of lighting conditions. These demonstrate qualitatively that the perturbation set contains a reasonable set of lighting changes.

\begin{figure}[t]
  \centering
  \includegraphics[scale=1]{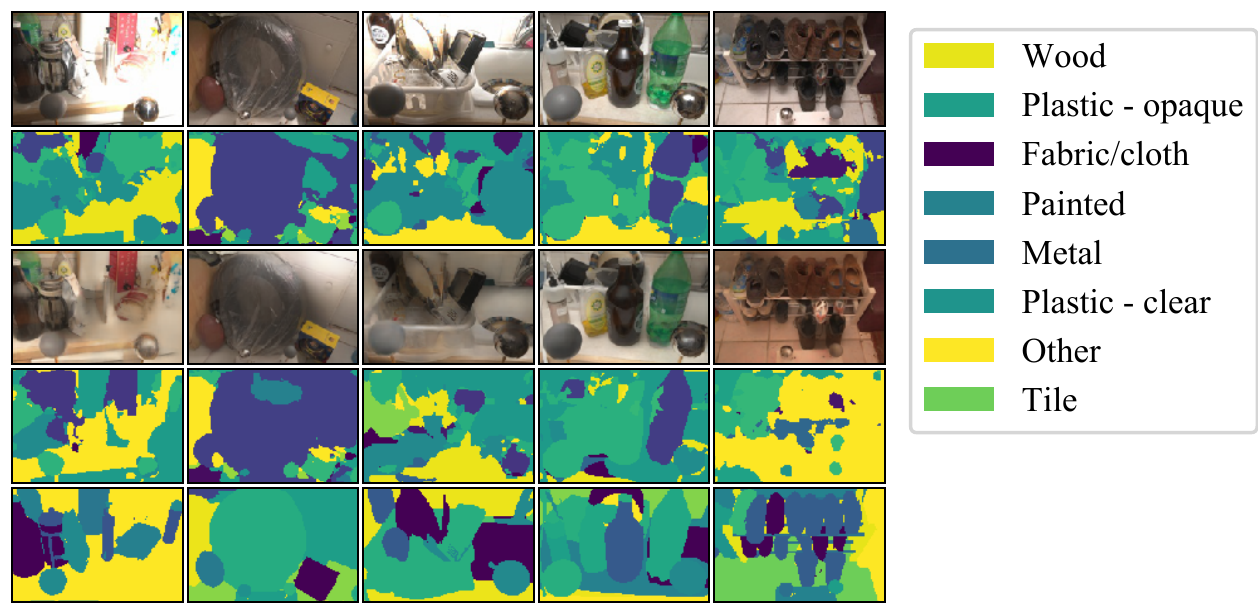}
  \caption{Adversarial examples that can cause on average 9.9\% more pixels in the shown segmentation maps to be incorrect for a model trained with data augmentation. The first two rows are a benign lighting perturbation and its corresponding predicted material segmentation, and the next two rows are an adversarial lighting perturbation and its corresponding predicted material segmentation. For reference, the final row contains the true material segmentation. }
  \label{fig:mi_adversarial_aug}
\end{figure}
 
\begin{figure}[t]
  \centering
  \includegraphics[scale=1]{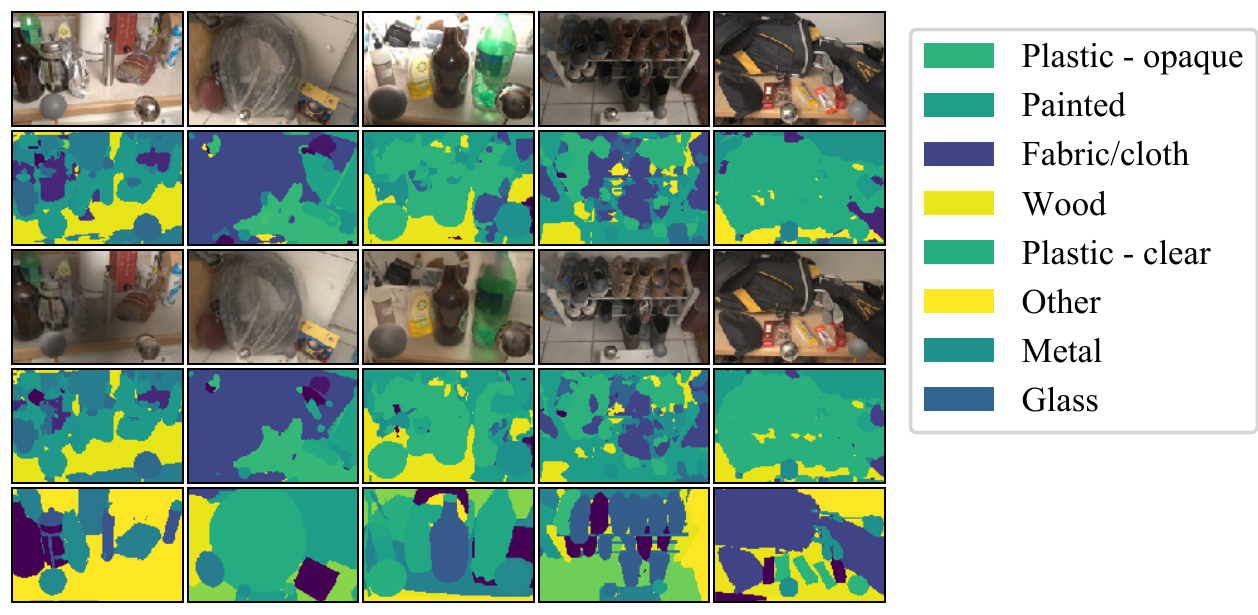}
  \caption{Adversarial examples that can cause on average 3\% more pixels in the shown segmentation maps to be incorrect for an adversarially trained model.}
  \label{fig:mi_adversarial}
\end{figure}

\begin{figure}[t]
  \centering
  \includegraphics[scale=1]{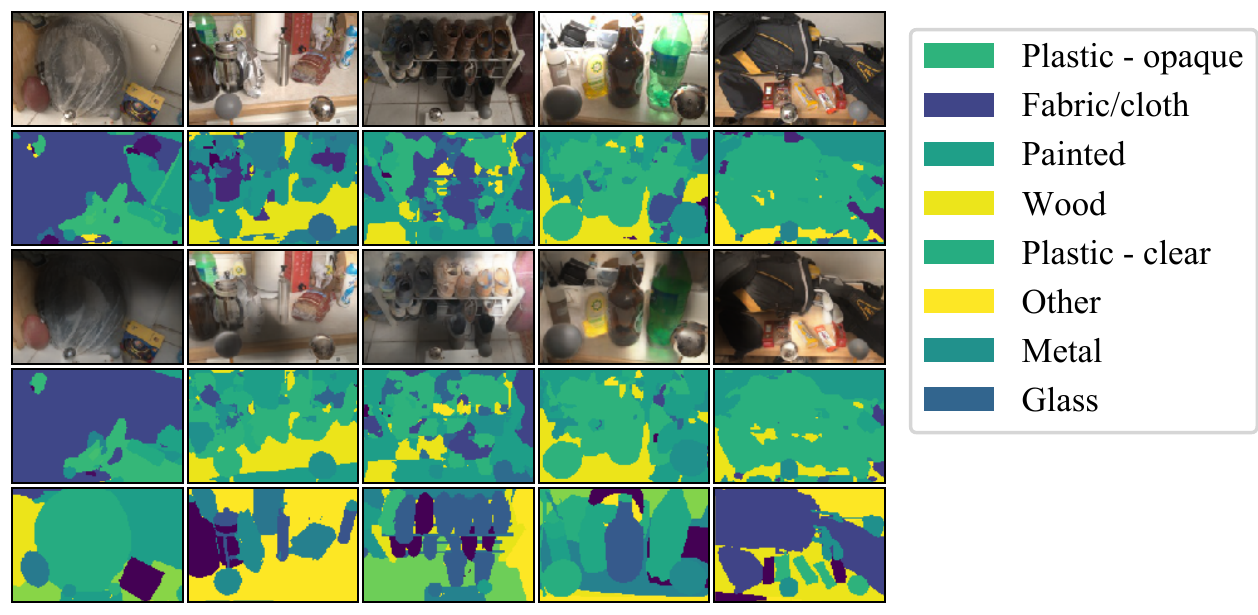}
  \caption{Adversarial examples at the full radius of $\epsilon=17$ that for an adversarially trained model, which are starting to cast dark shadows to obscure the objects in the image.}
  \label{fig:mi_adversarial_17}
\end{figure}

%\begin{table}[t]
%  \caption{Learning image segmentation models that are robust to changes in lighting with a CVAE perturbation set. }
%  \label{table:mi_robustness}
%  \centering
%  \begin{tabular}{lrrrrrr}
%    \toprule
%    & \multicolumn{1}{c}{Test set accuracy (\%)} & \multicolumn{3}{c}{Test set robust accuracy (\%)}\\
%    \cmidrule(r){2-2} \cmidrule(r){3-5}
%    Method                    & Perturbed & $\epsilon=7.35$ & $\epsilon=8.81$ & $\epsilon=17$\\
%    \midrule
%    Fixed lighting angle      & $37.2$ & $26.3$ & $24.2$ & $14.9$\\
%    Data augmentation         & $45.2$ & $38.0$ & $36.5$ & $27.1$\\
%    CVAE data augmentation    & $41.5$ & $35.5$ & $33.9$ & $24.7$\\
%    CVAE adversarial training & $41.7$ & $39.4$ & $38.8$ & $35.4$\\
%    %CVAE randomized smoothing $(\sigma=6.9)$ & & & & \\
%    \bottomrule
%  \end{tabular}
%\end{table}

\subsection{Adversarial training and randomized smoothing}
\label{app:mi_robustness}
In this final task, we leverage our learned perturbation set to learn a model which is robust to lighting perturbations. Specifically the multi-illumination dataset comes with annotated material maps that label each pixel with the type of material (e.g. fabric, plastic, and paper), so a natural task to perform in this setting is to learn material segmentation maps. We report robustness results at $\epsilon \in \{ 7.35, 8.81, 17\}$ which correspond to the the 50th, 75th, and 100th percentiles. We select a fixed lighting angle which appears to be neutral and train a material segmentation model to generate a baseline, which achieves $37.2\%$ accuracy on the test set but only 14.9\% robust accuracy under the full perturbation model. 

We evaluate several methods for improving robustness of the segmentation maps to lighting perturbations, namely data augmentation with the perturbed data, data augmentation with the CVAE, and adversarial training with the CVAE. The results are tabulated in Table \ref{table:mi_robustness}. The CVAE data augmentation approach is not as effective in this setting at improving robust accuracy, as the pure data augmentation approach does reasonably well. However, the adversarial training approach unsurprisingly has the most robust accuracy, maintaining $35.4\%$ robust accuracy under the full perturbation set at $\epsilon=17$ and outperforming the data augmentation approaches. We plot adversarial perturbations at $\epsilon=7.35$ and the resulting changed segmentation maps in Figure \ref{fig:mi_adversarial_aug} for a model trained with pure data augmentation and in Figure \ref{fig:mi_adversarial} for a model trained to be adversarially robust, the latter of which has, on average, less pixels that are effected by an adversarial lighting perturbation. Adversarial examples at the full radius of $\epsilon=17$ are shown in Figure \ref{fig:mi_adversarial_17}, where we see that the perturbation set is beginning to cast dark shadows over regions of the image to force the model to fail.

\begin{figure}[t]
  \centering
  \includegraphics[scale=1]{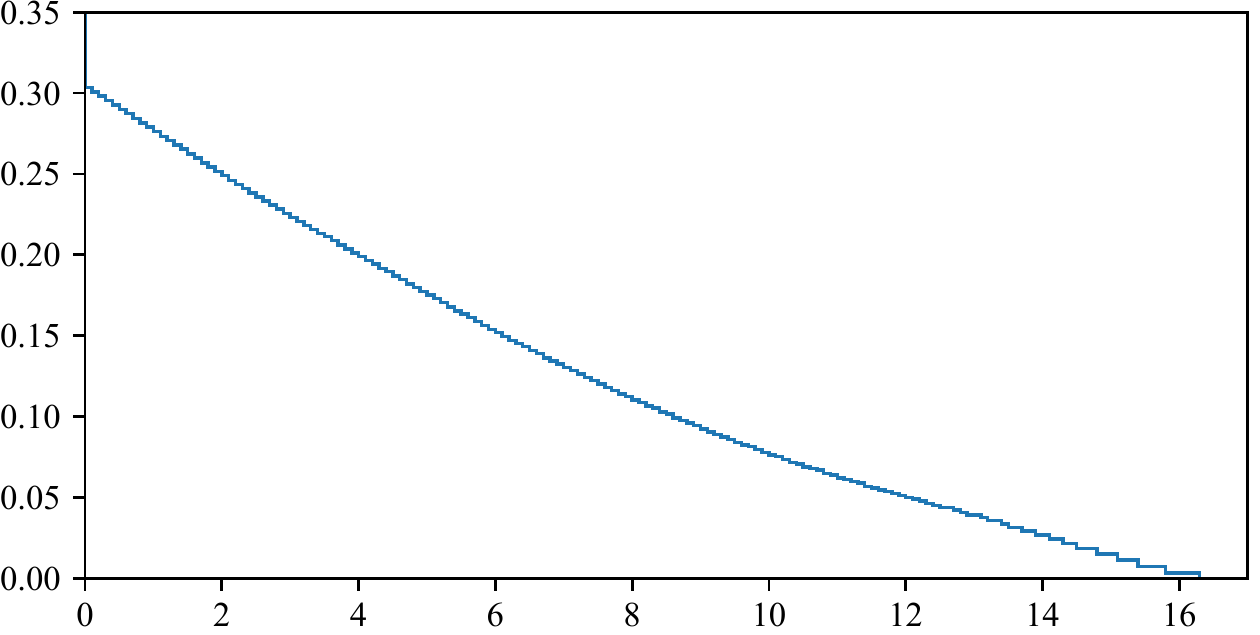}
  \caption{Certified accuracy for material segmentation model using randomized smoothing. The horizontal axis denotes the certified radius, and the vertical axis denotes the fraction of pixels that are certifiably correct at that radius. Note that a radius of $\epsilon=17$ is the maximum radius of the perturbation set. }
  \label{fig:mi_certified}
\end{figure}

Finally, we train a certifiably robust material segmentation model using the perturbation set. We train using a noise level of $\sigma=6.90$ which can certify a radius of at most $\epsilon=17$, or the limit of the perturbation set. The resulting robustness curve is plotted in Figure \ref{fig:mi_certified}. The model achieves $30.7\%$ perturbed accuracy and is able to get $12.4\%$ certified accuracy at the $50$th percentile of radius $\epsilon=7.35$. The key takeaway is that we can now certify real-world perturbations to some degree, in this case certifying robustness to 50\% of lighting perturbations with non-trivial guarantees. 
\end{document}